\newif\iffig
\newlength{\zparindent}
\DeclareMathOperator*{\E}{\mathbb E}
\DeclareMathOperator*{\p}{\mathbb P}
\newcommand{\abs}[1]{{\left| #1 \right|}}
\newcommand{\norm}[1]{{\left\| #1 \right\|}}
\newcommand{\nnorm}[1]{{\left\vert\kern-0.25ex\left\vert\kern-0.25ex\left\vert #1 
    \right\vert\kern-0.25ex\right\vert\kern-0.25ex\right\vert}}
\newcommand{\braces}[1]{{\left\{ #1 \right\}}}
\newcommand{\brackets}[1]{{\left[ #1 \right]}}
\newcommand{\paren}[1]{{\left( #1 \right)}}
\declaretheorem{corollary}
\declaretheorem{lemma}
\declaretheorem{theorem}
\newenvironment{lemma*}[1]
  {\auxlemma}
  {\endauxlemma}
\newcommand{\pd}[2]{{\frac{\partial #1}{\partial #2}}}
\newcommand{\dd}[2]{{\frac{\mathrm{d} #1}{\mathrm{d} #2}}}
\newcommand{\ind}{\mathds{1}}
\newcommand{\T}{^\intercal}
\newcommand{\R}{\mathbb R}
\newcommand{\cV}{\mathcal V}
\newcommand{\cW}{\mathcal W}
\newcommand{\cX}{\mathcal X}
\newcommand{\cY}{\mathcal Y}
\newcommand{\f}[1]{\mathbf{#1}}
\newcommand{\fa}{\mathbf a}
\newcommand{\fq}{\mathbf q}
\newcommand{\fu}{\mathbf u}
\newcommand{\fv}{\mathbf v}
\newcommand{\fw}{\mathbf w}
\newcommand{\fx}{\mathbf x}
\newcommand{\fz}{\mathbf z}
\newcommand{\fA}{\mathbf A}
\newcommand{\fD}{\mathbf D}
\newcommand{\fI}{\mathbf I}
\newcommand{\fP}{\mathbf P}
\newcommand{\fQ}{\mathbf Q}
\newcommand{\fR}{\mathbf R}
\newcommand{\fW}{\mathbf W}
\newcommand{\fX}{\mathbf X}
\DeclareRobustCommand\onedot{\futurelet\@let@token\@onedot}
\def\@onedot{\ifx\@let@token.\else.\null\fi\xspace}
\def\iid{{i.i.d}\onedot}
\def\eg{{e.g}\onedot} 
\def\ie{{i.e}\onedot}
\begin{document}

\twocolumn[
\icmltitle{Towards Understanding Knowledge Distillation}



\icmlsetsymbol{equal}{*}

\begin{icmlauthorlist}
\icmlauthor{Mary Phuong}{ist}
\icmlauthor{Christoph H. Lampert}{ist}
\end{icmlauthorlist}

\icmlaffiliation{ist}{IST Austria (Institute of Science and Technology Austria)}

\icmlcorrespondingauthor{Mary Phuong}{bphuong@ist.ac.at}

\icmlkeywords{Machine Learning, Deep Learning, Distillation, Theory}

\vskip 0.3in
]



\printAffiliationsAndNotice{}  

\begin{abstract}
Knowledge distillation, i.e. one classifier being trained on the 
outputs of another classifier, is an empirically very successful 
technique for knowledge transfer between classifiers. It has even 
been observed that classifiers learn much faster and more reliably 
if trained with the outputs of another classifier as soft labels, 
instead of from ground truth data. So far, however, there is no 
satisfactory theoretical explanation of this phenomenon. In this work, 
we provide the first insights into the working mechanisms of 
distillation by studying the special case of linear and deep 
linear classifiers. Specifically, we prove a generalization 
bound that establishes fast convergence of the expected risk of a 
distillation-trained linear classifier. From the bound and its proof we 
extract three key factors that determine the success of distillation: 
\emph{data geometry} -- geometric properties of the data distribution, 
in particular class separation, has an immediate influence on the 
convergence speed of the risk;
\emph{optimization bias} -- gradient descent optimization finds a very 
favorable minimum of the distillation objective; 
and \emph{strong monotonicity} -- the expected risk of the student 
classifier always decreases when the size of the training set grows.
\end{abstract}

\section{Introduction}



In 2014, \citet{hinton14} made a surprising observation: 
they found it \emph{easier} to train classifier using the 
real-valued outputs of another classifier as target values 
than using actual ground-truth labels. 
Calling the procedure \emph{knowledge distillation}, or 
\emph{distillation} for short, they noticed the positive effect to occur 
even when the existing classifier (called \emph{teacher}) was 
trained on the same data as it used afterwards for the 
distillation-training of the new classifier (called \emph{students}). 
Since that time, the positive properties of distillation-based 
training has been confirmed several times: the optimization step 
is generally more well-behaved than the optimization step in 
label-based training, and it needs less if any regularization or 
specific optimization tricks. 
Consequently, in several fields, distillation has
become a standard technique for transfering the information 
between classifiers with different architectures, such as from 
deep to shallow neural networks or from ensembles of classifiers 
to individual ones. 

While the practical benefits of distillation are beyond 
doubt, its theoretical justification remains almost completely unclear.
Existing explanations rarely go beyond qualitative statements, 
\eg claiming that learning from soft labels should be easier 
than learning from hard labels, or that in a multi-class setting 
the teacher's output provides information about how similar different 
classes are to each other.
%

%

In this work, we follow a different approach. 
Instead of studying distillation in full generality, we restrict 
our attention to a simplified, analytically tractable, setting: 
binary classification with linear teacher and linear student 
(either shallow or deep linear networks). 
%
For this situation, we achieve the first 
quantitative results about the effectiveness of distillation-based 
training. 
Specifically, our main results are: \textbf{1) We prove a 
generalization bound that establishes extremely fast 
convergence of the risk of distillation-trained classifiers}. 
In fact, it can reach zero risk from finite training sets. 
\textbf{2) We identify three key factors that explain the 
  success of distillation}:
\emph{data geometry} -- geometric properties of the data distribution, 
in particular class separation, directly influence the convergence 
speed of the student's risk;
\emph{optimization bias} -- even 
though the distillation objective can have many optima, 
gradient descent optimization is guaranteed to find a 
particularly favorable one;
and \emph{strong monotonicity} -- increasing the training set 
always decreases the risk of the student classifier. 

\section{Related Work}

Ideas underpinning distillation have a long history dating back to the work of \citet{ba14,bucilua06,craven96,li14,liang08}.
In its current and most widely known form, it was introduced by \citet{hinton14} in the context of neural network compression.

Since then, distillation has quickly gained popularity among practitioners and established its place in deep learning folklore.
It has been found to work well across a wide range of applications, including e.g.
transferring from one architecture to another \cite{geras15},
compression \cite{howard17,polino18},
integration with first-order logic \cite{hu16} or other prior knowledge \cite{yu17},
learning from noisy labels \cite{li17},
defending against adversarial attacks \cite{papernot16},
training stabilization \cite{romero15,tang16},
distributed learning~\cite{polino18}, reinforcement
learning~\cite{rusu15} and data privacy~\cite{celik2017patient}.

In contrast to the empirical success, the mathematical principles 
underlying distillation's effectiveness have largely remained a 
mystery.
Only very works examine 
distillation from a theoretical perspective.
\citet{lopezpaz16} cast distillation as a form of learning using privileged information (LUPI, \citealt{vapnik15}),
a learning setting in which additional per-instance information is available at training time but not at test time.
However, the LUPI view concentrates on 
the aspect that the teacher's supervision to the student is noise-free. 
This argument fails to explain, \eg, the success 
of distillation even when the original problem is noise-free to start 
with.
The only other theoretical analysis we are aware of is by \citet{urner11}, who study distillation as a form of semi-supervised learning.
Specifically, they show that a two-step procedure, consisting of first training a teacher on a small labelled dataset and then training the student on a separate large dataset labelled by the teacher, can be more effective than training the student directly on the small labelled dataset.
The paper's focus is on the semi-supervised aspect, i.e.\ the gains from having a large unlabelled dataset.

%


A more distantly related topic is \emph{machine teaching}~\cite{zhu15}.
In machine teaching, a machine learning system is trained by a 
human teacher, whose goal is to hand-pick as small a training 
set as possible, while ensuring that the machine learns a desired 
hypothesis.
Transferring knowledge via machine teaching techniques is extremely effective:
perfect transfer is often possible from a small finite teaching set~\cite{zhu13,liu16}. 
However, the price for this radical reduction in sample complexity is the expensive 
training set construction.
Our work shows that, at least in the linear setting, distillation 
achieves a similar effectiveness with a more practical form of 
supervision.

\section{Background: Linear Distillation} \label{sec:setup}
We formally introduce distillation in the context of binary 
classification.
Let $\cX\subseteq\R^d$ be the input space, $\cY=\{0,1\}$ the label space, 
and $P_{\fx}$ the probability distribution of inputs.
We assume $P_\fx$ has a density.

The \emph{teacher} $h^*:\cX\to\cY$ is a fixed linear classifier, 
\ie $h^*(\fx) = \ind\braces{\fw_*\T \fx \geq 0}$ for some 
$\fw_*\in\R^d\setminus\braces{\f0}$, where $\ind\braces{.}$ 
returns 1 if the argument is true and 0 otherwise.
The \emph{student} also is a linear classifier, 
$h(\fx) = \ind\braces{\fw\T\fx \geq 0}$. 

We allow the weight vector to be parameterised as a product of 
matrices, $\fw\T=\fW_N\fW_{N-1}\cdots\fW_1$ for some $N\geq 1$.
When $N\geq 2$, this parameterisation is known as a 
\emph{deep linear network}.
Although deep linear networks have no additional capacity compared 
to directly parameterised linear classifiers $(N=1; \fw\T=\fW_1)$, they induce 
different gradient-descent dynamics, and are often studied as a 
first step towards understanding deep nonlinear networks~\cite{saxe14,kawaguchi16,hardt17}.

\emph{Distillation} proceeds as follows.
First, we collect a \emph{transfer set} $\braces{(\fx_i, y_i) }_{i=1}^n$ 
consisting of inputs $\fx_i$ sampled \iid from $P_\fx$, and 
\emph{soft labels} $y_i=\sigma(\fw_*\T\fx_i)$ provided by the 
teacher, where $\sigma$ is the sigmoid function, $\sigma(x) = 1/(1+\exp(-x))$.
The soft (real-valued) labels can be thought of as a more informative version 
of the hard (0/1-valued) labels of the standard classification setting. 
We write $\fX=\brackets{\fx_1,\dots,\fx_n}\in\R^{d\times n}$ for 
the data matrix. 
Second, the student is trained by minimizing the (normalized) 
cross-entropy loss,
\begin{multline} \label{eq:loss} 
  L^1(\fw) = -\frac 1 n \sum_{i=1}^n \Big[ y_i\log \sigma(\fw\T\fx_i)
  \\ + (1-y_i)\log(1-\sigma(\fw\T\fx_i))\Big] - L^*,
\end{multline}
where $L^*$ is a normalization constant, such that the minimum 
of $L^1$ is 0.
It only serves the purpose of simplifying notation and has no 
effect on the optimization.
%

The student observes the loss as a function of its parameters, 
\ie the individual weight matrices, 
\begin{equation}
  L(\fW_1,\dots,\fW_N) := L^1((\fW_N\fW_{N-1}\cdots\fW_1)\T),
\end{equation}
and optimizes it via gradient descent.
For the theoretical analysis, we avoid the complications of stepsize 
selection and adopt the notion of \emph{infinitesimal step 
size}\footnote{For readers who are unfamiliar with gradient flows, it 
suffices to think of the stepsize as finite and "sufficiently small".}, 
which turns the gradient descent procedure into a continuous \emph{gradient flow}.
We write $\fW_i(\tau)$ for the value of the matrix $\fW_i$ at time 
$\tau\in [0,\infty)$, with $\fW_i(0)$ denoting the initial value,  
and $\fw(\tau)\T = \fW_N(\tau)\cdots\fW_1(\tau)$.
Then, each $\fW_i(\tau)$, for $i\in\braces{1,\dots,N}$, evolves according 
to the following differential equation. \vspace*{-2mm}
\begin{equation}\label{eq:gd}
  \pd{\fW_i(\tau)}{\tau} = -\pd{L}{\fW_i}(\fW_1(\tau),\dots,\fW_N(\tau)).
\end{equation}
The student is trained until convergence, \ie $\tau\to\infty$.
We measure the \emph{transfer risk} of the trained student, defined as 
the probability that its prediction differs from that of the teacher,
\begin{equation}
  R(h) = \p_{\fx\sim P_\fx}\brackets{ h(\fx) \neq h^*(\fx)}.
\end{equation}
In Section~\ref{subsec:transfer-rates}, we will derive a bound for the transfer risk and establish how rapidly it decreases as a function of $n$.

\section{Generalization Properties of Linear Distillation}
This section contains our main technical results. 
First, in Section~\ref{subsec:closedform}, we provide an explicit 
characterization of the outcome of distillation-based training in 
the linear setting. In other words, we identify \emph{what the student 
actually learns}. In particular, we prove that the student is able 
to perfectly identify the teacher's weight vector, if the number 
of training examples ($n$) is equal to the dimensionality of 
the data ($d$) or higher. 
If less data is available, under minor assumptions, the student 
finds the best approximation of the teacher's weight vector that 
is possible within the subspace spanned by the training data. 

In Section~\ref{subsec:transfer-rates} we use these 
results to study the generalization properties of the student 
classifier, \ie we characerize \emph{how fast the student learns}.
Specifically, we prove a generalization bound with much more 
appealing properties than what is possible in the classic 
situation of learning from hard labels. %
As soon as enough training data is available ($n\geq d$), 
the student's risk is simply $0$. 
Otherwise, the risk can be bounded explicitly in a 
distribution-dependent way that, in particular, allows 
us to identify three key factors that explain the success 
of distillation, and to understand when distillation-based 
transfer is most effective.

\subsection{What Does the Student Learn?}\label{subsec:closedform}
In this section, we derive in closed form the asymptotic 
solution to the gradient flow~(\ref{eq:gd}) undergone by 
the student when trained by distillation. 
We state the results separately for directly parameterized linear 
classifiers $(N=1)$ and deep linear networks $(N\geq 2)$, as the
settings require slightly different ways of initializing parameters.
Namely, in the former case, initializing $\fw(0)=\f0$ is valid,
while in the latter case, this would lead to 
vanishing gradients, and we have to initialize with small 
(typically random) values. 

\begin{restatable}{theorem}{closedformshallow} \label{thm:closed-form-N1}
  Assume the student is a directly parameterised linear classifier $(N=1)$ 
  with weight vector initialised at zero, $\fw(0)=\f0$. 
  Then, the student's weight vector fulfills almost surely 
  \begin{equation}
  \fw(t) \to \hat\fw,
  \end{equation}
   for $t\to\infty$, with
  \begin{equation}
    \hat\fw = \left\{
      \begin{array}{cl}
        \fw_*, &\ n\geq d, \\
        \fX(\fX\T\fX)^{-1}\fX\T\fw_*, &\ n < d.
      \end{array}\right.
  \end{equation}  
\end{restatable}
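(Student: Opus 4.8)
The plan is to exploit the special structure of the cross-entropy gradient, which confines the flow to the column space of the data, and then to identify the limit as the unique minimizer of the (strictly convex) loss restricted to that subspace.

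First I would compute the gradient of $L^1$. Using the standard identity $\dd{}{z}\brackets{-y\log\sigma(z)-(1-y)\log(1-\sigma(z))} = \sigma(z)-y$, the chain rule gives
\begin{equation*}
  \nabla L^1(\fw) = \frac1n\sum_{i=1}^n\paren{\sigma(\fw\T\fx_i)-y_i}\fx_i,
\end{equation*}
so the gradient is always a linear combination of the columns $\fx_i$, i.e. $\nabla L^1(\fw)\in\mathrm{col}(\fX)$ for every $\fw$. Since the flow~(\ref{eq:gd}) reads $\dot\fw(\tau)=-\nabla L^1(\fw(\tau))$ and starts at $\fw(0)=\f0\in\mathrm{col}(\fX)$, the subspace $\mathrm{col}(\fX)$ is invariant and $\fw(\tau)\in\mathrm{col}(\fX)$ for all $\tau$. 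This confinement is the crux: it forces the limit to lie in $\mathrm{col}(\fX)$, which is exactly what separates the $n<d$ case from the $n\ge d$ case.

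Next I would record two facts about $L^1$ as a function on $\R^d$. Its Hessian $\frac1n\sum_i\sigma'(\fw\T\fx_i)\fx_i\fx_i\T$ is positive semidefinite, so $L^1$ is convex; and because $y_i=\sigma(\fw_*\T\fx_i)$ and $\sigma$ is strictly increasing, $L^1(\fw)=0$ (its global minimum) holds if and only if $\fX\T\fw=\fX\T\fw_*$. Since $P_\fx$ has a density, $\fX$ has full rank $\min(n,d)$ almost surely, and I would split into two cases. For $n\ge d$, full rank means the $\fx_i$ span $\R^d$, the Hessian is positive definite, $L^1$ is strictly convex, and its unique global minimizer is $\fw_*$. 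For $n<d$, full column rank makes $\fX\T\fX$ invertible; the restriction of $L^1$ to $\mathrm{col}(\fX)$ is then strictly convex (for $\fv=\fX\fa\ne\f0$ one has $\fX\T\fv=\fX\T\fX\fa\ne\f0$), and the unique interpolating point in $\mathrm{col}(\fX)$ solving $\fX\T\fw=\fX\T\fw_*$ is $\hat\fw=\fX(\fX\T\fX)^{-1}\fX\T\fw_*$, the orthogonal projection of $\fw_*$ onto $\mathrm{col}(\fX)$.

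The remaining and genuinely delicate step is to show the flow actually \emph{converges} to this minimizer as $\tau\to\infty$, rather than merely approaching the minimizing value. Here I would use the Lyapunov functional $\tfrac12\norm{\fw(\tau)-\hat\fw}^2$: convexity gives $\dd{}{\tau}\tfrac12\norm{\fw-\hat\fw}^2=-\lrangle{\fw-\hat\fw,\nabla L^1(\fw)}\le -(L^1(\fw)-L^1(\hat\fw))\le 0$, so the distance to $\hat\fw$ is non-increasing and the trajectory is bounded. Integrating $\dd{}{\tau}L^1(\fw)=-\norm{\nabla L^1(\fw)}^2$ shows $\int_0^\infty\norm{\nabla L^1}^2\,\mathrm d\tau<\infty$, so some subsequence $\fw(\tau_k)$ converges to a point $\bar\fw\in\mathrm{col}(\fX)$ with $\nabla L^1(\bar\fw)=\f0$; by the (restricted) strict convexity this forces $\bar\fw=\hat\fw$. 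Finally, monotonicity of the distance upgrades subsequential convergence to full convergence $\fw(\tau)\to\hat\fw$. I expect this convergence argument --- combining boundedness, a subsequence extraction, and strict convexity on the invariant subspace --- to be the main obstacle; the gradient computation and the identification of $\hat\fw$ are routine once the confinement to $\mathrm{col}(\fX)$ is in place.
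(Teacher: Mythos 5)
Your argument is correct. Its first two steps --- confinement of the trajectory to $\mathrm{span}(\fX)$ because the gradient always lies in the data span, and identification of $\hat\fw$ as the unique global minimizer of $L^1$ lying in that subspace --- are exactly the paper's Lemma~\ref{lemma:wt-in-span} and Theorem~\ref{thm:global-opt}. The convergence step, however, takes a genuinely different route. The paper establishes restricted strong convexity and a restricted Polyak--Lojasiewicz inequality on the sublevel set $\braces{\fw: L^1(\fw)\leq L^1(\f0)}$ (Theorem~\ref{thm:strong-convexity} and Corollary~\ref{cor:pl}; the nontrivial ingredient is that on this set the inner products $\fw\T\fx_i$ stay in a compact interval, so $\sigma(\fw\T\fx_i)(1-\sigma(\fw\T\fx_i))\geq\omega>0$ uniformly and the Hessian is bounded below along the span), from which it deduces linear convergence $L^1(\fw(t))\leq L^1(\f0)\,e^{-ct}$ and then the explicit distance bound $\norm{\fw(t)-\hat\fw}^2\leq \tfrac 2\mu L^1(\fw(t))$. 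You instead use only plain convexity: monotonicity of the Lyapunov function $\norm{\fw(\tau)-\hat\fw}^2$, integrability of $\norm{\nabla L^1(\fw(\tau))}^2$, extraction of a convergent subsequence whose limit is a critical point, strict convexity on the invariant subspace to identify that limit as $\hat\fw$, and the monotone distance to promote subsequential to full convergence. Your version is more elementary --- no sublevel-set analysis and no uniform curvature bound --- but it is purely qualitative: it yields no convergence rate, whereas the paper's machinery gives an exponential rate and is reused essentially verbatim in the deep ($N\geq 2$) case. For the theorem as stated, both arguments suffice.
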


Theorem~\ref{thm:closed-form-N1} shows a remarkable property of 
distillation-based training for linear systems: if sufficiently 
many (at least $d$) data points are available, the student 
exactly recovers the teacher's weight vector, $\fw_*$.
This is a strong justification for distillation as a method of 
\emph{knowledge transfer} between linear classifiers and the 
theorem establishes that the effect occurs not just in the 
infinite data limit ($n\to\infty$), as one might have expected, 
but already in the finite sample regime ($n\geq d$). 

When few data points are available ($n<d$), the weight vector
learned by the student is simply the \emph{projection of the 
teacher's weight vector onto the data span} (the subspace spanned
by the columns of $\fX$).
In a sense, this is the best the student can do: the gradient descent
update direction $\pd{\fw(\tau)}{\tau}$ always lies in the data span, 
so there is no way for the student to learn anything outside of it.
The projection is the best subspace-constrained approximation of 
$\fw_*$ with respect to the Euclidean norm.
The extent to which Euclidean closeness implies closeness in 
predictions is a separate matter, and the subject of Section~\ref{subsec:transfer-rates}.

\begin{proof}[Proof sketch of Theorem~\ref{thm:closed-form-N1}]
  First, notice that $\hat\fw$ is a global minimiser of $L^1$.
  Moreover, when $n\geq d$, it is (almost surely wrt. $\fX\sim P_\fx^n$) unique, and when $n < d$, it is (almost surely) the only one lying in the span of $\fX$ and thus potentially reachable by gradient descent.

  The proof consists of two parts.
  We prove that
  a) the gradient flow~(\ref{eq:gd}) drives the objective value towards the optimum, 
$L^1(\fw(t)) \to 0$ as $t\to\infty$, and b)
  the distance between $\fw(t)$ and the claimed asymptote $\hat\fw$ is upper-bounded by the objective gap, \vspace*{-2mm}
  \begin{equation} \label{eq:dist-from-opt}
    \norm{\fw(t)-\hat\fw}^2 \leq cL^1(\fw(t))
  \end{equation}
  for some constant $c>0$ and all $t\in[0,\infty)$.

  For part a), observe that $L^1$ is convex.
  For any $\tau\in[0,\infty)$, the time-derivative of $L^1(\fw(\tau))$ is negative unless we are at a global minimum,
  \begin{align}
    \begin{split} \label{eq:dLdt}
    \dd{}{\tau} L^1(\fw(\tau))
    &= \nabla L^1(\fw(\tau))\T \paren{\pd{\fw(\tau)}{\tau}} \\
    &= - \norm{\nabla L^1(\fw(\tau))}^2,
    \end{split}
  \end{align}
  implying that the objective value $L^1(\fw(\tau))$ decreases monotonically in $\tau$.
  Hence, if we denote by $\cW = \braces{\fw: L^1(\fw)\leq L^1(\f0)}$ the $L^1(\f0)$-sublevel set of the objective, we know that $\fw(\tau)\in\cW$ for all $\tau\in[0,\infty)$.
  One can show that on this set, $L^1$ satisfies strong convexity, but only along certain directions: for some $\mu>0$ and all $\fw, \fv\in\cW$ such that $\fv-\fw\in\mathrm{span}(\fX)$,
  \begin{multline} \label{eq:strong-convexity}
    L^1(\fv) \geq L^1(\fw) + \nabla L^1(\fw)\T(\fv-\fw) + \frac{\mu} 2 \norm{\fv-\fw}^2.\!\!\!\!
  \end{multline}
  This allows us (via a technical derivation that we omit here) to relate the objective gap to the gradient norm:
  it can be shown that there exists $c'>0$, such that
  \begin{equation}
    c'L^1(\fw) \leq \frac 1 2 \norm{\nabla L^1(\fw)}^2 .
  \end{equation}
  Applying the above to $\fw(\tau)$ in (\ref{eq:dLdt}), we are able to bound the amount of reduction in the objective in terms of the objective itself, ultimately proving linear convergence.

  For part b),
  invoke~(\ref{eq:strong-convexity}) with $\fv=\fw(\tau)$ and $\fw=\hat \fw$; this gives
$
    L^1(\fw(\tau)) \geq \frac \mu 2 \norm{\fw(\tau)-\hat\fw}^2.
 $
\end{proof} 
The full proof is given in the Supplementary Material.

The next results is the analog of Theorem~\ref{thm:closed-form-N1} 
for deep linear networks. Here, some technical conditions 
are needed because the parameters cannot all be initialized at $0$. 

\begin{restatable}{theorem}{closedformdeep} \label{thm:closed-form}
  Let $\hat\fw$ be defined as in Theorem~\ref{thm:closed-form-N1}.
  Assume the student is a deep linear network, initialized such that
  for some $\epsilon>0$, 
  \begin{align}
    \norm{\fw(0)} < \min\Big\{ \norm{\hat\fw}, \epsilon^{N} & \paren{\epsilon^2 \norm{\hat\fw}^{-\frac 2N} + \norm{\hat\fw}^{2-\frac {2}N}}^{-\frac N2} \Big\},
                    \label{ass:near-zero} \\
    L^1(\fw(0)) &< L^1(\f0), \label{ass:better-than-zero} \\
    \fW_{j+1}(0)\T\fW_{j+1}(0) &= \fW_j(0)\fW_j(0)\T \label{ass:balancedness}
  \end{align}
  for $j=1,\dots,N-1$. 
  Then, for $n\geq d$, 
  student's weight vector fulfills almost surely 
  \begin{equation}
  \fw(t) \to \hat\fw,
  \end{equation}
  and for $n<d$, 
  \begin{equation}
  \norm{\fw(t) -  \hat\fw} \leq \epsilon,
  \end{equation}
  for all $t$ large enough.
\end{restatable}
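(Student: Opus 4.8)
The plan is to collapse the coupled dynamics of the matrices $\fW_1,\dots,\fW_N$ into an autonomous flow for the end-to-end vector $\fw$, and then to run the argument of Theorem~\ref{thm:closed-form-N1} on a \emph{perturbed} version of that flow. The entry point is the balancedness assumption~(\ref{ass:balancedness}): a standard computation shows that the quantities $\fW_{j+1}\T\fW_{j+1}-\fW_j\fW_j\T$ are conserved by the gradient flow~(\ref{eq:gd}), so balancedness persists for all $\tau$. Under this invariant, the product $\fw(\tau)\T=\fW_N(\tau)\cdots\fW_1(\tau)$ obeys a closed ODE
\begin{equation*}
  \pd{\fw}{\tau} = -\,A(\fw)\,\nabla L^1(\fw), \qquad A(\fw) = \norm{\fw}^{2-\frac 2N}\paren{\fI + (N-1)\tfrac{\fw\fw\T}{\norm{\fw}^2}},
\end{equation*}
where $A(\fw)\succeq\norm{\fw}^{2-2/N}\fI$ is symmetric positive definite for $\fw\neq\f0$ (for $N=1$ it reduces to plain gradient flow). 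Thus the deep network performs a \emph{preconditioned} descent on the same convex objective $L^1$ as before.

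Next I would decompose $\fw=\fw^\parallel+\fw^\perp$ into its projections onto $\mathrm{span}(\fX)$ and its orthogonal complement. Since $L^1$ depends on $\fw$ only through the products $\fw\T\fx_i$, the gradient $\fg:=\nabla L^1(\fw)$ always lies in $\mathrm{span}(\fX)$ and is a function of $\fw^\parallel$ alone. Projecting the effective ODE onto the orthogonal complement annihilates the $\fg$-term and leaves the scalar linear equation $\pd{\fw^\perp}{\tau}=-\lambda(\tau)\fw^\perp$ with $\lambda(\tau)=(N-1)\norm{\fw}^{-2/N}\fw\T\fg$. The decisive observation is that the same quantity controls $\norm{\fw}$: from $\dd{}{\tau}\norm{\fw}^2=-2N\norm{\fw}^{2-2/N}\fw\T\fg$ one gets $\lambda(\tau)=-\tfrac{N-1}{N}\dd{}{\tau}\log\norm{\fw}$, which integrates to the clean identity
\begin{equation*}
  \norm{\fw^\perp(\tau)} = \paren{\tfrac{\norm{\fw(\tau)}}{\norm{\fw(0)}}}^{\!\frac{N-1}{N}}\norm{\fw^\perp(0)}.
\end{equation*}
This is precisely what dictates the peculiar shape of~(\ref{ass:near-zero}): the out-of-span part can inflate only by a power of the ratio of norms, so keeping it below $\epsilon$ forces $\norm{\fw(0)}$ to be polynomially small in $\epsilon$ and $\norm{\hat\fw}$.

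It remains to handle the in-span part and the norm. For convergence of the objective I would argue as in part a) of Theorem~\ref{thm:closed-form-N1}: the loss decreases monotonically, $\dd{}{\tau}L^1(\fw)=-\fg\T A(\fw)\fg\leq-\norm{\fw}^{2-2/N}\norm{\fg}^2$, and combining this with the restricted strong convexity~(\ref{eq:strong-convexity}) (which bounds $\norm{\fg}^2$ below by $L^1$) yields $L^1(\fw(\tau))\to0$, hence $\fw^\parallel(\tau)\to\hat\fw$, the unique minimiser inside the span. The catch is the prefactor $\norm{\fw}^{2-2/N}$, which degenerates if $\fw$ collapses to the origin, the one point where $A$ is singular. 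This is exactly where~(\ref{ass:better-than-zero}) enters: $L^1(\fw(0))<L^1(\f0)$ places the trajectory strictly inside $\braces{\fw:L^1(\fw)<L^1(\f0)}$, and a short argument via $\dd{}{\tau}\norm{\fw}^2=-2N\norm{\fw}^{2-2/N}\fw\T\fg$ shows the norm cannot decay to $0$ from there, giving a uniform lower bound $\norm{\fw(\tau)}\geq m>0$ and genuine linear convergence.

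Finally I would assemble the cases. When $n\geq d$ we have $\mathrm{span}(\fX)=\R^d$ almost surely, so $\fw^\perp\equiv\f0$ and the above gives $\fw(\tau)\to\hat\fw=\fw_*$ exactly. When $n<d$, $\fw^\parallel\to\hat\fw$, and the displayed identity together with a crude bound $\norm{\fw(\tau)}^2\leq\norm{\hat\fw}^2+\epsilon^2$ and $\norm{\fw^\perp(0)}\leq\norm{\fw(0)}$ converts~(\ref{ass:near-zero}) into $\norm{\fw^\perp(\tau)}\leq\epsilon$ for all large $\tau$; the triangle inequality then yields $\norm{\fw(\tau)-\hat\fw}\leq\norm{\fw^\parallel(\tau)-\hat\fw}+\norm{\fw^\perp(\tau)}\leq\epsilon$. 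I expect the main obstacle to be the interplay of the two norm bounds: the out-of-span estimate needs $\norm{\fw(\tau)}$ bounded \emph{above}, which itself presupposes $\fw^\perp$ is already small, so the estimate must be closed by a continuity/bootstrap argument in $\tau$, while the objective-convergence step needs $\norm{\fw(\tau)}$ bounded \emph{below} away from the singularity of $A$. Making both quantitative with constants matching the exact threshold in~(\ref{ass:near-zero}) is the delicate heart of the proof; everything else parallels the $N=1$ analysis.
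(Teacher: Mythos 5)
Your proposal follows essentially the same route as the paper: the preconditioned end-to-end ODE is exactly the induced flow of Arora et al.\ (Claim 2) that the paper invokes, the objective convergence via restricted strong convexity with the origin excluded by~(\ref{ass:better-than-zero}) matches, and your integrated identity $\norm{\fw^\perp(\tau)}=\paren{\norm{\fw(\tau)}/\norm{\fw(0)}}^{(N-1)/N}\norm{\fw^\perp(0)}$ is precisely the paper's relation between $q(t)=\norm{\fP_\fQ\fw(t)}^2$ and $\norm{\fw(t)}$. The one circularity you flag at the end -- needing an upper bound on $\norm{\fw(\tau)}$ to control the out-of-span part -- the paper closes not by a bootstrap in $\tau$ but by a self-consistency argument on $\nu=\limsup_{t\to\infty}\norm{\fw(t)}$: it derives $1\leq f(\nu)$ for an explicit decreasing $f$, exhibits a constant $K$ with $f(K)\leq 1$ to conclude $\nu\leq K$, and then condition~(\ref{ass:near-zero}) turns the Pythagorean decomposition into $\limsup_{t\to\infty}\norm{\fw(t)-\hat\fw}^2<\epsilon^2$.
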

The interpretation of the theorem is analogous to 
Theorem~\ref{thm:closed-form-N1}. 
Given enough data ($n\geq d$), the student learns to perfectly 
mimic the teacher. Otherwise, it learns an approximation at 
least $\epsilon$-close to the projection of the teacher's
weight vector onto the data span.

The conditions \eqref{ass:near-zero}--\eqref{ass:balancedness}
appear for technical reasons and a closer look at them shows
that they do not pose problems in practice.
Condition~(\ref{ass:near-zero}) states that the network's weights 
should be initialised with sufficiently small values. 
Consequently, this assumption is easy to satisfy in practice. 
%
%
Condition~(\ref{ass:better-than-zero}) requires that the 
initial loss is smaller than the loss at $\fw=\f0$.
This condition guarantees that the gradient flow does not hit 
the point $\fw=\f0$, where all gradient vanish and the optimization 
would stop prematurely. 
In practice, when the step size is finite, the condition is 
not needed. Nevertheless, it is also not hard to satisfy: for any 
near-zero initialisation, $\fw(0) = \fw_0$, either $\fw_0$ or $-\fw_0$ 
will satisfy (\ref{ass:better-than-zero}), so at most one has to flip 
the sign on one of the $\fW_i(0)$ matrices.
Finally, condition~(\ref{ass:balancedness}) is called \emph{balancedness}~\cite{arora18}
and discussed in-depth in~\cite{arora19}). It simplifies the analysis 
of matrix products and makes it possible to explicitly analyze the 
evolution of $\fw$ induced by gradient flow in the $\fW_i$'s.
Assuming near-zero initialization, the condition is automatically 
satisfied approximately and there is some evidence~\cite{arora19} 
suggesting that \emph{approximate balancedness} may suffice for 
convergence results of the kind we are interested in.
Otherwise, the condition can also simply be enforced numerically.

\begin{proof}[Proof sketch of Theorem~\ref{thm:closed-form}]
  First, we establish convergence in the objective, $L^1(\fw(t))\to 0$ as $t\to\infty$, similarly to the case $N=1$.
  Unlike that case, however, the evolution of the end-to-end weight vector $\fw(\tau)$ is governed by complex mechanics induced by gradient flow in $\fW_i$'s.
  A key tool for analyzing this induced flow was recently established in~\cite{arora18}:
  the authors show that the induced flow behaves similarly to gradient flow with momentum applied directly to $\fw$.
  Making use of this result, one can proceed analogously as in the case of $N=1$ to show convergence in the objective.
  
  Second, to show convergence in parameter space, 
  we decompose $\fw(t)$ into its projection onto the span of $\fX$, and an orthogonal component.
  The $\fX$-component converges to $\hat\fw$, by strong convexity arguments as in the case $N=1$.
  It remains to show that the orthogonal component is small.
  Now, recall that in the case $N=1$, we initialise at $\fw(0)=\f0$ and move within the span, so the orthogonal component is always zero.
  When $N\geq 2$, the situation is different:
  a) we initialise with a potentially non-zero orthogonal component (because we need to avoid the spurious stationary point $\fw=\f0$), and
  b) the momentum term causes the orthogonal component to grow during optimisation.
  Luckily, the rate of growth can be precisely characterised and controlled by the initialisation norm $\norm{\fw(0)}$,
  so depending on how close to zero we initialise, we can upper-bound the size of the orthogonal component.
  This yields a bound on the distance $\norm{\fw(t)-\hat\fw}$.
\end{proof}
For the formal proof, we refer the reader to the Supplemental Material.

\subsection{How Fast Does the Student Learn?}\label{subsec:transfer-rates}

In this section, we present our main quantitative result, 
a bound for the expected transfer risk in linear distillation.

%
%
%
%
%

We first introduce some geometric concepts. 
For any $\fu,\fv \in \R^d\setminus\{\f0\}$, denote 
by $\bar\alpha(\fu,\fv)\in[0,\pi/2]$ the unsigned angle 
between the vectors $\fu$ and $\fv$
\begin{equation}
  \bar\alpha(\fu, \fv) = \cos^{-1}\!\paren{\frac {\abs{\fu\T \fv}} {\norm{\fu} \cdot \norm{\fv}}}.
  \label{eq:baralpha}
\end{equation}
A key quantity for us is the angle between $\fw_*$ and a randomly chosen $\fx$, for $\fx\sim P_\fx$.
For a given transfer task $(P_\fx, \fw_*)$, we denote by $p$ the reverse cdf of $\bar\alpha(\fw_*,\fx)$, 
\begin{equation}
  p(\theta) = \p_{\fx\sim P_{\fx}}[\bar\alpha(\fw_*, \fx)\geq \theta]
  \qquad \text{for}\quad \theta\in[0,\pi/2].   \label{eq:ptheta}
\end{equation}

\begin{figure}[t]
  \hspace{4mm}\begin{tikzpicture}[scale=0.9]
    \node (cc) at (0,0)    {};
    \node[left= 2.2 of cc] (lc) {};
    \node[right= 2.8 of cc] (rc) {};
    \node[above=0.5 of cc] (uc) {};
    \node[below=0.5 of cc] (dc) {};

    \node[above= 1.5 of cc, anchor=west] (ca) {$\fw_*$};
    \node[above= 1.5 of lc, anchor=west] (la) {$\fw_*$};
    \node[above= 1.5 of rc, anchor=west] (ra) {$\fw_*$};

    \node[left =0.7 of lc] (ll) {};
    \node[right=0.7 of lc] (lr) {};
    \node[left =0.9 of cc] (cl) {};
    \node[right=0.9 of cc] (cr) {};
    \node[left =1.2 of rc] (rl) {};
    \node[right=1.2 of rc] (rr) {};

    \fill[gray!50] (lc) ellipse (0.7 and 1.4);
    \fill[gray!50] (rc) ellipse (1.4 and 0.7);
    \fill[gray!50] (uc) circle (0.8 and 0.5);
    \fill[gray!50] (dc) circle (0.8 and 0.5);

    \draw[->] (cc.center) -- (ca.west);
    \draw[->] (lc.center) -- (la.west);
    \draw[->] (rc.center) -- (ra.west);

    \draw (ll) -- (lr);
    \draw (cl) -- (cr);
    \draw (rl) -- (rr);

    \node[below=0.8 of lc, anchor=west] {$P_\fx$};
    \node[below=0.8 of cc, anchor=west] {$P_\fx$};
    \node[below=0.4 of rc, anchor=west] {$P_\fx$};

    \node[below=1.5 of lc] {Task A};
    \node[below=1.5 of cc] {Task B};
    \node[below=1.5 of rc] {Task C};
  \end{tikzpicture}
  \begin{tabular}{l} \\[-3mm] \hspace{-3mm}
    \includegraphics[scale=0.27]{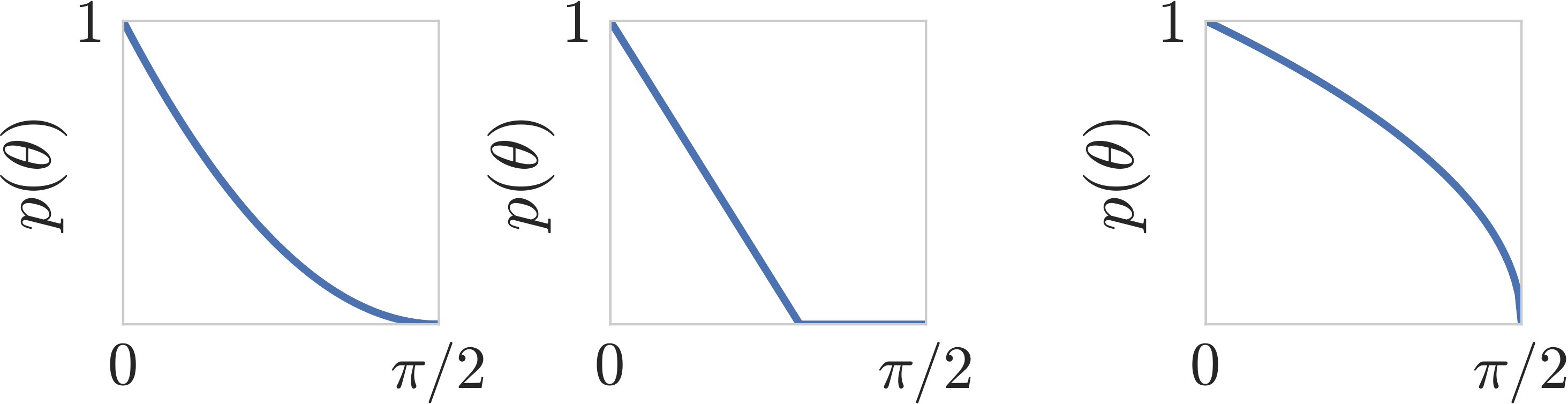}
  \end{tabular}
  \caption{Schematic illustration of $p(\theta)$ for three different transfer tasks.
  In Task A, the angular alignment between the data and the teacher's weight vector
  is high, so $p(\theta)$ is fast descreasing. In Task B, it is also high, and in 
  additional the classes are separated by a margin, so $p(\theta)$ reaches $0$ 
  before $\beta=\pi/2$. In Task C, the angular alignment is low, so $p(\theta)$ 
  decreases rather slowly.
  }
  \label{fig:angular-alignment}
\end{figure}
By construction, $p(\theta)$ is monotonically decreasing, starting with
$p(0)=1$ and approaches $0$ for $\theta\to\pi/2$. 
Figure~\ref{fig:angular-alignment} illustrates this behavior 
for three exemplary data distributions as \emph{Tasks A,B and C}.
In \emph{Task A}, the probability mass is well aligned with the 
direction of the teacher's weight vector. 
The probability that a randomly chosen data point $x\sim {P_\fx}$ 
has a large angle with $\fw_*$ is small. Therefore, the value 
of $p(\theta)$ quickly drops with growing angle $\theta$. 
In \emph{Task B}, the data also aligns well with $\fw_*$, but 
in addition, the data region remains bounded away from 
the decision boundary. Therefore, certain large angles 
can never occur, \ie there exists a value $\theta_0<\pi/2$, 
such that $p(\theta)=0$ for $\theta\geq \theta_0$.
In \emph{Task C}, the situation is different: the data 
distribution is concentrated along the decision boundary
and the probability of a angle between $\fw_*$ and a 
randomly chosen data point $x\sim {P_\fx}$ is large. 
As a consequence, $p(\theta)$ drops more slowly with growing 
angle than in the previous two settings.%

We are now ready to state the main result. 
For improved readability, we phrase it
for a student with infinitesimally small 
initialization, \ie $\epsilon\to 0$. 
The general formulation 
can be found in the supplemental material.

\begin{theorem}[Transfer risk bound for linear distillation] \label{thm:bound}
For any training set $\fX\in\R^{d\times n}$, let $\hat h_\fX(\fx) = \ind\braces{\hat\fw\T\fx \geq 0}$ 
be the linear classifier learned by distillation from a teacher with weight vector $\fw_*$.
Then, when $n\geq d$, it holds that 
\begin{align}
\E_{\fX\sim P_\fx^{\otimes n}}\brackets{ R\big(\hat h_\fX \big)} &=  0. \label{eq:boundzero} 
\intertext{For $n<d$, it holds for any $\beta\in[0, \pi/2]$ that}
\E_{\fX\sim P_\fx^{\otimes n}}\brackets{R\big(\hat h_\fX \big)} 
  &\leq    p(\beta) + p(\pi/2-\beta)^n \label{eq:bound}
  \end{align}
\end{theorem}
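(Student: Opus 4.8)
The plan is to reduce the whole statement to the geometry of the learned weight vector $\hat\fw$, which by Theorem~\ref{thm:closed-form-N1} equals $\fw_*$ when $n\ge d$ and the orthogonal projection $\Pi_\fX\fw_*$ onto $\mathrm{span}(\fX)$ (where $\Pi_\fX=\fX(\fX\T\fX)^{-1}\fX\T$) when $n<d$. The case $n\ge d$ is then immediate: $\hat\fw=\fw_*$ almost surely, so $\hat h_\fX=h^*$ and $R(\hat h_\fX)=0$ almost surely, which gives \eqref{eq:boundzero}. For $n<d$ I would first use Fubini to rewrite the left-hand side of \eqref{eq:bound} as the probability of a student/teacher disagreement over the $n+1$ i.i.d.\ draws $\fx_1,\dots,\fx_n,\fx$, and then bound this quantity pointwise in $\fX$ before integrating. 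All boundary events such as $\fw_*\T\fx=0$ or $\fw_*\perp\mathrm{span}(\fX)$ have probability zero since $P_\fx$ has a density, so $\gamma:=\bar\alpha(\fw_*,\hat\fw)$ is almost surely well defined.

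The core is a geometric lemma bounding $R(\hat h_\fX)$ by $\gamma$. Writing both predictions as signs of inner products and using full angles $\angle(\cdot,\cdot)\in[0,\pi]$, a disagreement on a test point $\fx$ requires $\angle(\fw_*,\fx)$ and $\angle(\hat\fw,\fx)$ to fall on opposite sides of $\pi/2$. Because $\hat\fw$ is a projection of $\fw_*$, we have $\fw_*\T\hat\fw=\norm{\hat\fw}^2\ge 0$, hence $\gamma\in[0,\pi/2]$, and the spherical triangle inequality $\abs{\angle(\fw_*,\fx)-\angle(\hat\fw,\fx)}\le\gamma$ forces $\angle(\fw_*,\fx)\in[\pi/2-\gamma,\pi/2+\gamma]$, i.e.\ $\bar\alpha(\fw_*,\fx)\ge\pi/2-\gamma$. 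Integrating over $\fx$ yields $R(\hat h_\fX)\le p(\pi/2-\gamma)$.

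Next I would control $\gamma$ by the best-aligned training point. Since $\mathrm{span}(\fx_i)\subseteq\mathrm{span}(\fX)$, projecting onto the larger subspace cannot decrease the projection norm, so $\norm{\hat\fw}=\norm{\Pi_\fX\fw_*}\ge\norm{\Pi_{\fx_i}\fw_*}$ for each $i$. Using $\cos\bar\alpha(\fw_*,\fu)=\norm{\Pi_\fu\fw_*}/\norm{\fw_*}$ for any direction $\fu$ (and in particular $\cos\gamma=\norm{\hat\fw}/\norm{\fw_*}$), monotonicity of cosine gives $\gamma\le\bar\alpha(\fw_*,\fx_i)$ for every $i$, hence $\gamma\le\min_i\bar\alpha(\fw_*,\fx_i)$.

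Finally I would split the expectation over the bad event $B=\{\min_i\bar\alpha(\fw_*,\fx_i)\ge\pi/2-\beta\}$. On $B\C$ some training angle lies strictly below $\pi/2-\beta$, so $\gamma<\pi/2-\beta$ and the lemma together with monotonicity of $p$ gives $R(\hat h_\fX)\le p(\pi/2-\gamma)\le p(\beta)$; on $B$ I bound $R(\hat h_\fX)\le 1$. Independence of the $\fx_i$ gives $\p(B)=p(\pi/2-\beta)^n$, so $\E[R]\le p(\beta)\,\p(B\C)+\p(B)\le p(\beta)+p(\pi/2-\beta)^n$, which is \eqref{eq:bound}. The main obstacle is the geometric lemma of the second paragraph: making the disagreement-region (double-wedge) argument rigorous, in particular the careful passage between the signed full angles needed for the triangle inequality and the unsigned angle $\bar\alpha$ that enters $p$, and confirming $\fw_*\T\hat\fw\ge 0$ so that $\gamma\le\pi/2$.
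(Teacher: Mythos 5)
Your proposal is correct and follows essentially the same route as the paper: the two key ingredients — the strong-monotonicity fact that $\bar\alpha(\fw_*,\hat\fw)\le\min_i\bar\alpha(\fw_*,\fx_i)$ (the paper's Lemma~\ref{lemma:improvement}, which you re-derive via monotonicity of projection norms rather than a QR factorisation) and the angle triangle inequality converting this into agreement on all test points with $\bar\alpha(\fw_*,\fx)<\beta$ — are exactly those of the paper's proof, and yield the same two-term bound. The only difference is organisational: you condition on the training set first (isolating the bad event $\{\forall i:\ \bar\alpha(\fw_*,\fx_i)\ge\pi/2-\beta\}$ and stating the pointwise bound $R(\hat h_\fX)\le p(\pi/2-\gamma)$ as a lemma), whereas the paper integrates over the test point first; both orderings of Fubini give the identical estimate $p(\beta)+(1-p(\beta))\,p(\pi/2-\beta)^n$.
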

Equation~\eqref{eq:boundzero} is unsurprising, of 
course, because in Section~\ref{subsec:closedform} we 
already established that for $n\geq d$ the student is 
able to perfectly mimic the teacher. 


Inequality \eqref{eq:bound}, however, is --to our knowledge-- 
the first quantitative characterization how well a student 
can learn via distillation. 

Before we provide the proof sketch, 
we present two instantiations of the bound 
for specific classes of tasks that provide 
insight how fast the right hand side of 
\eqref{eq:bound} actually decreases. 

\begin{figure}[t]
  \centering
  \includegraphics[scale=0.32]{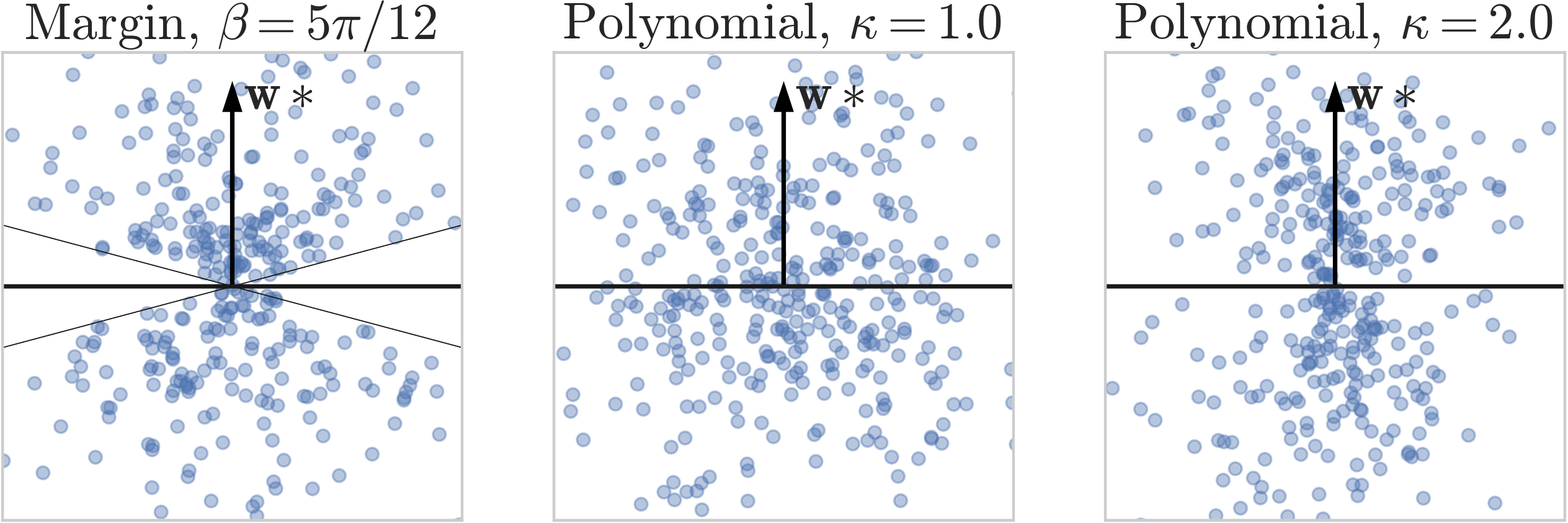}
  \caption{Examples of 2D distributions that fulfill the large-margin condition (left) and the polynomial condition with different parameters (center, right).}
  \label{fig:margin-poly}
\end{figure}

\noindent\textbf{The margin case.}
The first class of tasks we consider are tasks in which the 
classes are separated by an \emph{angular margin}, illustrated 
in Figure~\ref{fig:margin-poly} (left).
These tasks are characterized by a `wedge' of zero probability 
mass near the boundary\footnote{In bounded domains this 
condition is, in particular, fulfilled in the \emph{classical margin} 
situation~\cite{schoelkopf2002lwk}, when the classes are 
separated by a positive distance from each other.}. 
For these tasks, we obtain from Theorem~\ref{thm:bound} 
that the expected risk decays exponentially in $n$, up to $n=d-1$.

\begin{corollary}[Transfer risk of large-margin distributions] \label{cor:margin}
%
If there exists $\beta\in[0,\pi/2]$ such that $p(\beta)=0$ and $\gamma := p(\pi/2-\beta)<1$, then
  \begin{equation}
    \E_{\fX\sim P_\fx^n}\brackets{ R\big(\hat h_\fX \big)}
  \leq         \gamma^n.
  \end{equation}  
\end{corollary}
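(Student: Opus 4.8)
The plan is to obtain the corollary as a direct specialization of Theorem~\ref{thm:bound}, exploiting the fact that the bound \eqref{eq:bound} holds for \emph{every} $\beta\in[0,\pi/2]$, and in particular for the single margin value whose existence is postulated in the hypothesis. The crucial observation is that this value may be chosen independently of the sample size $n$, so that substituting it collapses the additive term $p(\beta)$ uniformly across all $n$, leaving only the geometric term $p(\pi/2-\beta)^n$.

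First I would split according to the two regimes of Theorem~\ref{thm:bound}. For $n\geq d$, equation \eqref{eq:boundzero} gives $\E_{\fX\sim P_\fx^{n}}\brackets{R(\hat h_\fX)}=0$; since $\gamma=p(\pi/2-\beta)$ is a probability and hence nonnegative, $0\leq\gamma^n$ and the claimed inequality holds trivially in this range. The substantive regime is $n<d$, where I would invoke \eqref{eq:bound} with $\beta$ chosen to be exactly the margin value from the hypothesis. By assumption the first summand vanishes, $p(\beta)=0$, while by the definition of $\gamma$ the second summand equals $p(\pi/2-\beta)^n=\gamma^n$; summing the two yields $\E_{\fX\sim P_\fx^{n}}\brackets{R(\hat h_\fX)}\leq\gamma^n$, as desired.

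There is no genuine obstacle here: the corollary is essentially a one-line instantiation of the general bound. The only points requiring care are selecting the correct $\beta$ — namely the margin value annihilating $p(\beta)$, rather than any $n$-dependent optimizer of the right-hand side of \eqref{eq:bound} — and disposing of the $n\geq d$ case separately, since \eqref{eq:bound} itself is asserted only for $n<d$. It is worth emphasizing that the hypothesis $\gamma<1$ is precisely what renders the bound meaningful, guaranteeing the exponential decay $\gamma^n\to 0$ claimed in the surrounding text.
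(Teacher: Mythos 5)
Your proof is correct and matches the paper's (implicit) argument: the corollary is a direct instantiation of Theorem~\ref{thm:bound} at the margin value $\beta$, with $p(\beta)=0$ killing the additive term, and the $n\geq d$ case handled trivially by \eqref{eq:boundzero}. The paper gives no separate proof for this corollary precisely because it is this one-line specialization.
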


\noindent\textbf{The polynomial case.}
The second class are tasks for which we can upper-bound $p$ by a $\kappa$-order polynomial.
This can be done trivially for any task by setting $\kappa=0.0$, but that choice would yield a vacuous bound.
Higher values of $\kappa$ correspond to stronger assumptions on the distribution but enable better rates.
Figure~\ref{fig:margin-poly} (center, right) shows examples of polynomial distributions for $\kappa\in\braces{1.0, 2.0}$.
The special case $\kappa=1.0$ corresponds to a uniform angle distribution, while distribution with 
$\kappa=2.0$ have low probability mass near the decision boundary, while not necessarily exhibiting 
a margin. 

The following corollary establishes that for tasks with polynomial behavior of $p(\theta)$, 
the expected risk decays essentially at a rate of $(\log n/n)^\kappa$ or faster.

\begin{corollary}[Transfer risk of polynomial distributions] \label{cor:poly}
  If there exists a $\kappa\geq 0$ be such that $p(\theta) \leq c\cdot(1-(2/\pi)\theta)^\kappa$ 
  for all $\theta\in[0,\pi/2]$, then 
  \begin{equation}
    \E_{\fX\sim P_\fx^n}\brackets{ R\big(\hat h_\fX)}
       \leq c\cdot\frac{1 + (\log n)^\kappa}{n^\kappa} 
  \end{equation}  
\end{corollary}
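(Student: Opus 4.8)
The plan is to specialize the transfer-risk bound of \Cref{thm:bound} and then optimize over its free parameter $\beta$. When $n \geq d$ there is nothing to do, since \Cref{thm:bound} already gives $\E_{\fX}[R(\hat h_\fX)] = 0$, which lies below the claimed right-hand side; so I assume $n < d$. For every $\beta \in [0,\pi/2]$ we have $\E_{\fX}[R(\hat h_\fX)] \leq p(\beta) + p(\pi/2-\beta)^n$. Writing $r = (2/\pi)\beta \in [0,1]$ and inserting the polynomial hypothesis at the two arguments $\beta$ and $\pi/2-\beta$ gives $p(\beta) \leq c(1-r)^\kappa$ and $p(\pi/2-\beta) \leq c\,r^\kappa$, so that $\E_{\fX}[R(\hat h_\fX)] \leq c(1-r)^\kappa + (c\,r^\kappa)^n$. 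The first summand shrinks as $r \to 1$, the second as $r \to 0$, so the whole game is to choose $r$ that balances them.

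The decisive observation is the right scale for this trade-off: taking $r = 1 - \tfrac{\log n}{n}$, equivalently $\beta = \tfrac{\pi}{2}\bigl(1 - \tfrac{\log n}{n}\bigr)$ (which lies in $[0,\pi/2]$ for all $n \geq 1$), makes the first summand exactly of the target order, $c(1-r)^\kappa = c\,(\log n / n)^\kappa = c\,(\log n)^\kappa/n^\kappa$. So the first term is disposed of essentially for free once the offset is set to $\log n / n$.

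The main work — and the step I expect to be the real obstacle — is controlling the second summand $p(\pi/2-\beta)^n$, i.e.\ converting an $n$-th power of a quantity just below $1$ into genuine polynomial decay in $n$. With the above choice, $p(\pi/2-\beta) \leq c\,\bigl(1-\tfrac{\log n}{n}\bigr)^\kappa$, and the inequality $\log(1-x) \leq -x$ yields $\bigl(1-\tfrac{\log n}{n}\bigr)^{\kappa n} \leq e^{-\kappa \log n} = n^{-\kappa}$. For the normalized case $c = 1$ — which is the relevant one, being forced by $p(0)=1 \leq c$ and realized by all the distributions of \Cref{fig:margin-poly} — this gives $p(\pi/2-\beta)^n \leq n^{-\kappa}$. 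Adding the two contributions and using $c \geq 1$ to absorb the constant then gives $\E_{\fX}[R(\hat h_\fX)] \leq c\,(\log n)^\kappa/n^\kappa + n^{-\kappa} \leq c\,\tfrac{1+(\log n)^\kappa}{n^\kappa}$, as claimed.

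The subtle point to watch is precisely this last term: bounding $p(\pi/2-\beta)$ by $c\,r^\kappa$ and raising to the $n$ reintroduces a factor $c^n$, which is harmless only once $c$ is taken to be $1$ (or folded into the general formulation of the risk bound deferred to the supplement). It is instructive to contrast this with \Cref{cor:margin}: there a genuine margin makes $p(\beta)=0$ for a \emph{fixed} $\beta$, so no $n$-dependent balancing is needed and one gets clean exponential decay $\gamma^n$. In the polynomial regime no such $\beta$ exists, one is forced to let $\beta \to \pi/2$ at the rate $\log n/n$, and the $n^{-\kappa}$ coming from the $n$-th power is exactly what pairs with the $(\log n)^\kappa/n^\kappa$ coming from $p(\beta)$.
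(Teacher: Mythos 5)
Your proof is correct and follows essentially the same route as the paper's: apply Theorem~\ref{thm:bound}, insert the polynomial bound, and push $\beta$ toward $\pi/2$ at scale $\log n / n$ (the paper takes $\beta = (\pi/2)\, n^{-1/n}$ rather than your $(\pi/2)(1-\log n / n)$, a cosmetic difference since both choices make one summand exact and control the other via $1+x \le e^x$). Your concern about the $c^n$ factor is well placed but does not separate you from the paper: the paper's own proof silently drops $c$ from the intermediate bound and thus, like yours, really only establishes the displayed inequality for $c=1$.
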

\begin{proof}[Proof]
  We apply Theorem~\ref{thm:bound} and insert the polynomial 
  upper bound for $p$. For the case $n < d$, we get
  \begin{align}
    &\E_{\fX\sim P_\fx^n}\brackets{ R\big(\hat h_\fX \big)}   \nonumber
    \\
    &\leq (1-(2/\pi)\beta)^\kappa + (1-(2/\pi)(\pi/2 - \beta))^{n\kappa}.
\intertext{Setting $\beta = (\pi/2)\cdot n^{-1/n}$ and simplifying the resulting expressions
yields}
    &\leq
    \big(1-e^{-\frac{\log n}{n}}\big)^\kappa + n^{-\kappa}.
  \end{align}
  Finally, we use the inequality $e^x \geq 1+x$ and the claim follows.
\end{proof}


Note that, in contrast to many results in statistical learning theory,
the bounds are far from vacuous, even when only little data is 
available. 
This can best be seen in Corollary~\ref{cor:margin}, where $\gamma<1$ 
and hence $\gamma^n$ is an informative upper bound for the 
classification error.
These observations suggest that distillation operates in a very 
different regime from classical hard-target learning.
Standard bounds usually have little to say when $n<d$ and only 
start to be useful when $n\gg d$.
In contrast, (linear) distillation ensures perfect transfer when 
$n\geq d$ and non-vacuous bounds are possible even when $n<d$.

\subsection{Proof of Theorem 3}

The case $n\geq d$ follows trivially from the result of Theorem 1 and 2. 
%
For the case $n < d$,
the following property turns out to be crucial for obtaining a transfer rate of the form that we do.
\begin{restatable}[Strong monotonicity]{lemma}{improvement} \label{lemma:improvement}
  Let $\hat\fw(\fX)$ denote the distillation solution $\hat\fw$ as a function of the training data $\fX$.
  Then, for any full-rank datasets $\fX_-\in\R^{d\times n_-}$ and $\fX_+\in\R^{d\times n_+}$ such that $\fX_-$ is contained in $\fX_+$,
  \begin{equation}
   \bar\alpha(\fw_*, \hat\fw(\fX_+)) \leq \bar\alpha(\fw_*, \hat\fw(\fX_-)).  \label{eq:monotonicity}
  \end{equation}
\end{restatable}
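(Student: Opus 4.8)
The plan is to exploit the explicit projection structure of the distillation solution. For a full-rank dataset $\fX$ with $n<d$, Theorem~\ref{thm:closed-form-N1} gives $\hat\fw(\fX) = \fX(\fX\T\fX)^{-1}\fX\T\fw_*$, which is exactly $\fP_\fX\fw_*$, the orthogonal projection of $\fw_*$ onto the data span $V_\fX := \mathrm{span}(\fX)$ (the column space of $\fX$); for $n\geq d$ the span is all of $\R^d$ and the projection is $\fw_*$ itself, so the same interpretation holds uniformly. Because the columns of $\fX_-$ form a subset of those of $\fX_+$, the associated spans are nested, $V_- \subseteq V_+$, with orthogonal projectors $\fP_-$ and $\fP_+$. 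The goal is thus to show that the angle between $\fw_*$ and its projection onto a subspace can only shrink as the subspace grows.

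First I would reduce the angular statement to one about projection norms. For any subspace $V$ with $\fP_V\fw_* \neq \f0$, the decomposition $\fw_* = \fP_V\fw_* + (\fI - \fP_V)\fw_*$ is orthogonal, so $\fw_*\T\fP_V\fw_* = \norm{\fP_V\fw_*}^2$, and hence
\begin{equation}
  \cos\bar\alpha(\fw_*, \fP_V\fw_*) = \frac{\abs{\fw_*\T \fP_V\fw_*}}{\norm{\fw_*}\,\norm{\fP_V\fw_*}} = \frac{\norm{\fP_V\fw_*}}{\norm{\fw_*}}.
\end{equation}
Since $\bar\alpha$ takes values in $[0,\pi/2]$, on which $\cos$ is strictly decreasing, inequality~\eqref{eq:monotonicity} is equivalent to the reverse inequality on cosines, i.e.\ to $\norm{\fP_+\fw_*} \geq \norm{\fP_-\fw_*}$.

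The remaining step is the monotonicity of projection norms under subspace inclusion. Writing $V_+ = V_- \oplus W$ with $W = V_+ \cap V_-^\perp$, one has $\fP_+ = \fP_- + \fP_W$, where $\fP_W$ is the orthogonal projector onto $W$. Because $V_- \perp W$, the vectors $\fP_-\fw_*$ and $\fP_W\fw_*$ are orthogonal, so the Pythagorean identity gives
\begin{equation}
  \norm{\fP_+\fw_*}^2 = \norm{\fP_-\fw_*}^2 + \norm{\fP_W\fw_*}^2 \geq \norm{\fP_-\fw_*}^2,
\end{equation}
which is precisely what is needed. Combining the two steps yields \eqref{eq:monotonicity}.

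There is no serious obstacle here; the content is elementary linear algebra, and the only real work is recognising the projection structure and the clean formula $\cos\bar\alpha(\fw_*,\fP_V\fw_*) = \norm{\fP_V\fw_*}/\norm{\fw_*}$. The one point requiring care is the degenerate case $\fP_-\fw_* = \f0$ (equivalently $\hat\fw(\fX_-) = \f0$), in which $\bar\alpha(\fw_*, \hat\fw(\fX_-))$ is undefined. As $P_\fx$ has a density and the datasets are full rank, $\fw_*$ fails to be orthogonal to $V_-$ almost surely, so this case can be excluded; alternatively, adopting the convention $\bar\alpha(\fw_*, \f0) = \pi/2$ makes the claimed inequality hold trivially.
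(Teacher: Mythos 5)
Your proof is correct and follows essentially the same route as the paper's: both reduce the angular claim to the identity $\cos\bar\alpha(\fw_*,\fP_V\fw_*)=\norm{\fP_V\fw_*}/\norm{\fw_*}$ and then to the monotonicity of the projection norm under subspace inclusion, with your abstract projector/Pythagoras step playing exactly the role of the paper's QR-based comparison $\norm{\fQ_+\T\fw_*}\geq\norm{\fQ_-\T\fw_*}$. Your explicit treatment of the degenerate case $\fP_-\fw_*=\f0$ is a small point of extra care that the paper omits.
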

\begin{proof}
  If $n_+\geq d$, then the left-hand side of (\ref{eq:monotonicity}) is zero and the claim follows.
  Otherwise,
  assume wlog that the first $n_-$ columns of $\fX_-$ and $\fX_+$ coincide.
  Let $\fQ_+\fR_+ = \fX_+$ be the QR factorisation of $\fX_+$ with $\fQ_+\in\R^{d\times n_+}$ and $\fR_+\in\R^{n_+\times n_+}$, and similarly for $\fX_-$.
  Then $\hat\fw(\fX_+) = \fQ_+\fQ_+\T\fw_*$ and
  \begin{align}
    \cos(\bar\alpha(\fw_*, \hat\fw(\fX_+)))
    & =  \frac{\fw_*\T \fQ_+\fQ_+\T\fw_*} {\norm{\fw_*}\cdot\norm{\fQ_+\fQ_+\T\fw_*}} \\
    & =  \frac{\norm{\fQ_+\T\fw_*}} {\norm{\fw_*}},
  \end{align}
  and an analogous statement holds for $\fX_-$.
  Now, because the first $n_-$ columns of $\fQ_+$ coincide with $\fQ_-$, we have $\norm{\fQ_+\T\fw_*} \geq \norm{\fQ_-\T\fw_*}$ and
  \begin{equation}
    \cos(\bar\alpha(\fw_*, \hat\fw(\fX_+))) \geq \cos(\bar\alpha(\fw_*, \hat\fw(\fX_-))).
  \end{equation}
  Taking $\cos^{-1}$ on both sides (and remembering that $\cos^{-1}$ is decreasing) yields the claim.
\end{proof}

For the moment, think of $\bar\alpha(\fw_*,\hat\fw)$ as a proxy for the 
transfer risk, \ie the closer the trained student $\hat\fw$ is to the 
teacher $\fw_*$ in terms of angles, the lower the transfer risk.
A direct consequence of Lemma~\ref{lemma:improvement}, and the reason 
we call it `strong mononoticity', is that including additional data 
in the transfer set can never harm the transfer risk, only improve it.
This property is specific to distillation;
it does not hold in hard-target learning.

\begin{proof}[Proof of Theorem~\ref{thm:bound} $(n<d)$]
  For nonzero vectors $\fu,\fv \in \R^d$, we define $\alpha(\fu,\fv)\in[0,\pi]$ 
  as a variant of $\bar\alpha$ (Equation~\ref{eq:baralpha}) that takes
  the sign of $\fu\T \fv$ into account,
\begin{equation}
  \alpha(\fu, \fv) = \cos^{-1} \paren{\frac {{\fu\T \fv}} {\norm{\fu} \cdot \norm{\fv}}}.
\end{equation}
%
  We decompose the expected risk as follows:
  \begin{align}\begin{split} \label{eq:risk-decomposition}
      \E_{\fX\sim P_\fx^n}&\brackets{R\big(\hat h_\fX \big)} = 
      \p_{\substack{\fX\sim P_{\fx}^n \\ \fx\sim P_{\fx} }}[\fw_*\T \fx \cdot \hat\fw\T \fx < 0] \\
      &= \int_{\fx: \bar\alpha(\fw_*, \fx) \geq \beta} \p_{\fX\sim P_{\fx}^n}[\fw_*\T \fx\cdot \hat\fw\T \fx <0 |\fx]\, \mathrm{d}P_{\fx} \\
      &+ \int_{\fx: \bar\alpha(\fw_*,\fx) < \beta ,\, \fw_*\T \fx > 0} \p_{\fX\sim P_{\fx}^n}[\hat\fw\T \fx <0 |\fx]\, \mathrm{d}P_{\fx} \\      
      &+ \int_{\fx: \bar\alpha(\fw_*,\fx) < \beta ,\, \fw_*\T \fx < 0} \p_{\fX\sim P_{\fx}^n}[\hat\fw\T \fx >0 |\fx]\, \mathrm{d}P_{\fx} .
    \end{split}\end{align}
  Let us fix some $\fx$ for which $\bar\alpha(\fw_*, \fx)<\beta$ and $\fw_*\T \fx > 0$ (\ie an `easy' positive test example); for this $\fx$ we have $\alpha(\fw_*,\fx)=\bar\alpha(\fw_*,\fx)$. Consider the situation where $\bar\alpha(\fw_*, \fx_i)< \pi/2-\beta$ for some $i$ (\ie there is at least one good teaching point).
  Then, Lemma \ref{lemma:improvement} with $\fX_+=\fX$ and $\fX_-=\fx_i$ yields $\bar\alpha(\fw_*,\hat\fw) \leq \bar\alpha(\fw_*,\fx_i) < \pi/2 - \beta$.
  Combined with the triangle inequality, we obtain
  \begin{align}
    \alpha(\hat\fw,\fx) &\leq \alpha(\fw_*, \hat\fw) + \alpha(\fw_*,\fx) \\
    &\leq \bar\alpha(\fw_*,\fx_i) + \bar\alpha(\fw_*,\fx) < \pi/2,
  \end{align}
  which implies $\hat\fw\T \fx > 0$, \ie a correct prediction (same as the teacher's).
  Conversely, an error can occur only if $\bar\alpha(\fw_*,\fx_i) \geq \pi/2 - \beta $ for all $i$.
  Because $\fx_i$ are independent, we have
  \begin{align}
    \begin{split} \label{eq:bound-trainset1}
    \p_{\fX\sim P_{\fx}^n}[\hat\fw\T \fx < &\, 0 |\,\fx: \bar\alpha(\fw_*,\fx)<\beta,\,  \fw_*\T \fx>0] \\
    &\leq \p_{\fX\sim P_{\fx}^n}[\forall_i: \bar\alpha(\fw_*,\fx_i) \geq \pi/2 - \beta] \\
    &= p(\pi/2-\beta)^n.
    \end{split}
  \end{align}
  By a symmetric argument, one can show that
  \begin{multline} \label{eq:bound-trainset2}
    \p_{\fX\sim P_{\fx}^n}[\hat\fw\T \fx >0 |\,\fx: \bar\alpha(\fw_*,\fx)<\beta,\,  \fw_*\T \fx<0] \\ \leq p(\pi/2-\beta)^n.
  \end{multline}
  Combining (\ref{eq:risk-decomposition}), (\ref{eq:bound-trainset1}) and (\ref{eq:bound-trainset2}) yields the result:
  \begin{align*}
    \p_{\substack{\fX\sim P_{\fx}^n \\ \fx\sim P_{\fx} }}&[\fw_*\T \fx \cdot \hat\fw\T \fx < 0] \leq \\
 \leq&\, \p_{\fx}[\bar\alpha(\fw_*,\fx)\geq\beta] + \p_{\fx}[\bar\alpha(\fw_*,\fx)<\beta] \cdot p(\pi/2-\beta)^n \\
 =&\, p(\beta) + (1-p(\beta))\cdot p(\pi/2-\beta)^n.
  \end{align*}
\end{proof}

\section{Why Does Distillation Work?}\label{sec:discussion}
From the formal analysis in the previous section, three 
concepts emerge as key factors for the success
of distillation: 
\emph{data geometry, optimization bias, and strong monotonicity}.
In this section, we discuss these factors and provide some 
empirical confirmation how they affect or explain variations 
in the transfer risk.


\subsection{Data Geometry} \label{sec:exp-data-geom}
From Theorem~\ref{thm:bound} we know that the data geometry, in 
particular the \emph{angular alignment} between the data distribution 
and the teacher, crucially impact how fast the student can learn. 
Formally, this is reflected in $p(\theta)$: the faster it 
decreases, the easier it should be for the student to learn 
the task.

To experimentally test the effect of data geometry on the 
effectiveness of distillation, we adopt the setting of 
Corollary~\ref{cor:poly}. We consider a series of tasks 
of varying angular alignment, as measured by the degree, 
$\kappa$, of the polynomial by which $p(\theta)$ is upper bounded.  



Specifically, for any $\kappa$, the task $(P_\fx^\kappa, \fw_*^\kappa)$ 
is defined by the following sampling procedure.
First, an angle $a$ is sampled from the $\kappa$-polynomial 
distribution, i.e. $\p\brackets{a \geq \theta} = (1-(2/\pi)\theta)^\kappa$ for $\theta\in[0,\pi/2]$.
Then, a direction $\fz$ is uniformly sampled from all 
unit-length vectors that are at angle $a$ with the teacher's 
weight vector, $\bar\alpha(\fw_*,\fz) = a$.
Finally, $\fx = \nu \fz$ is returned for a random $\nu$, 
distributed as a one-dimensional standard Gaussian. 

We use an input space dimension of $d=1000$ and a transfer set size $n=20$.
Then, we train a linear student by distillation on each of the tasks and evaluate its transfer risk on held-out data.
Figure~\ref{fig:exp-data-geometry} shows the results.
The plot shows a clearly decreasing trend:
on tasks with more favorable data geometry (higher $\kappa$), transfer via distillation 
is more effective and the student achieves lower risk.

\begin{figure}[t]
  \centering
  \includegraphics[scale=0.35]{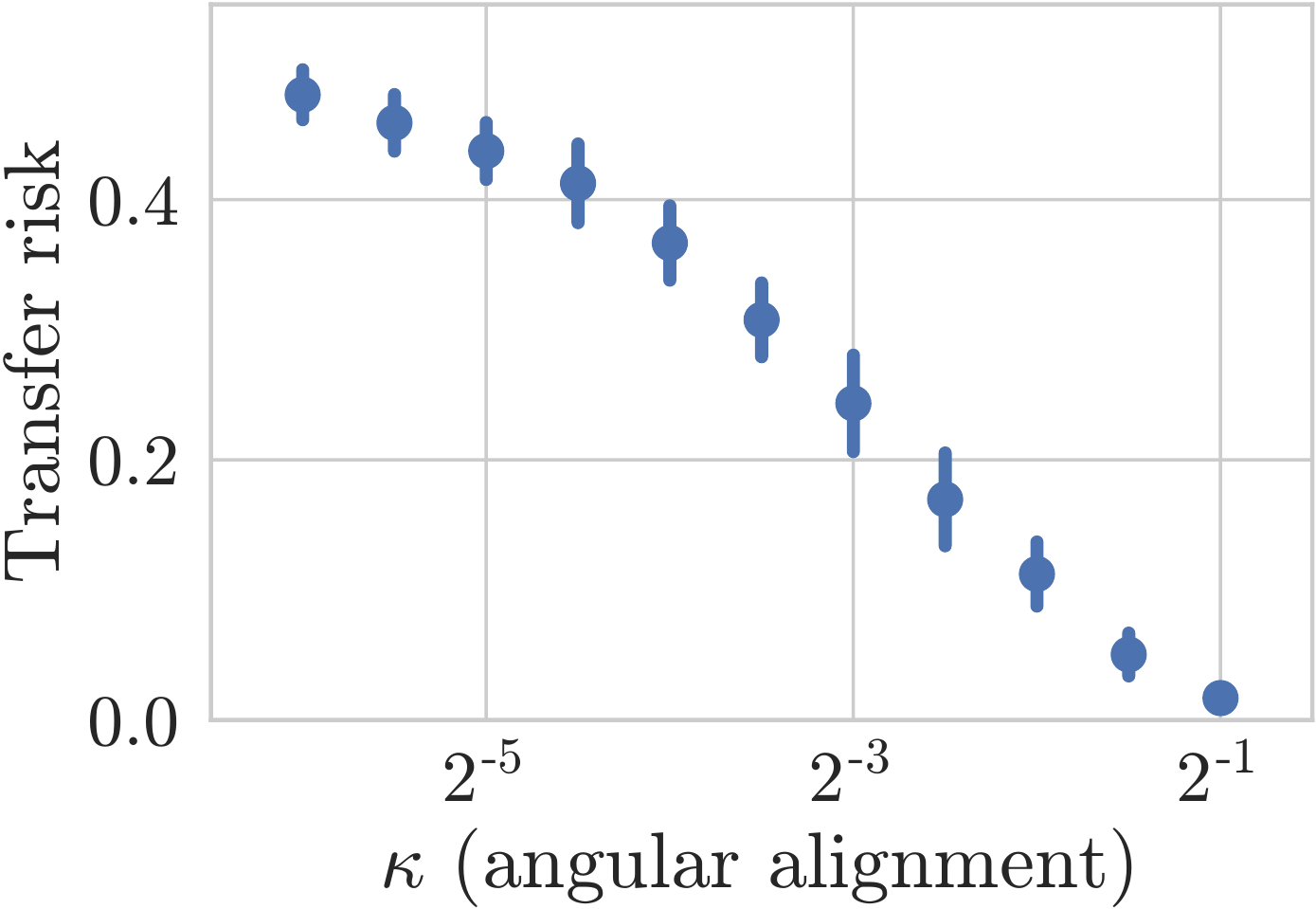}
  \caption{
    Transfer risk of linear distillation on tasks of varying angular alignment.
  }
  \label{fig:exp-data-geometry}
\end{figure}

\subsection{Optimization Bias} \label{sec:exp-optim-bias}
A second key factor for the success of distillation 
is a specific optimization bias.
For $n<d$, the distillation training objective~\eqref{eq:loss} 
has many minima of identical function value but potentially 
different generalization properties. Therefore, the 
optimization method used could have a large impact on the 
transfer risk.
As Theorems~\ref{thm:closed-form-N1} and~\ref{thm:closed-form}
show, gradient descent has a particularly favorable bias for 
distillation.

To verify this observation experimentally, we consider learners that 
are guided by an optimisation bias to different degrees:
at one end of the spectrum is the gradient-descent learner 
we have studied in previous sections, while at the other 
end is a learner that treats all minimizers of the distillation 
training loss equally, \ie that has no bias toward any of the 
solutions.
Specifically, consider learners with weights of the form
$\fw_\delta = \hat\fw + \delta\frac{\norm{\hat\fw}}{\norm\fq} \fq$,
where $\hat\fw$ is the gradient-descent distillation solution and $\fq$ 
is a Gaussian random vector in the subspace orthogonal to the data span, 
\ie if $\fX$ is the data matrix, then $\fX\T\fq = \f0$.
All learners of this form globally minimize the distillation training 
loss, and depending on $\delta$, they are more or less guided by the 
gradient-descent bias:
$\delta=0$ and $\abs{\delta} \to\infty$ represent the two extremes mentioned above.

We train the learners $\fw_\delta$ for $\delta\in\braces{0,10,\dots,90}$ 
on the digits $0$ and $1$ of the MNIST dataset, where inputs are 
treated as vectors in $\R^{784}$ and the teacher $\fw_*$ is a 
logistic regression trained to classify 0s and 1s on an 
independent training set.
We set the transfer set size to $n=100$ and evaluate the risk 
on the test set.

Figure~\ref{fig:exp-optim-bias} shows the result.
There is a clear trend in favor of learners that are more strongly 
guided by the gradient-descent bias (small $\delta$); these learners 
generally achieve lower transfer risk.
This result supports the idea of optimization bias as a key component 
of distillation's success.

\begin{figure}[t]
  \centering
  \includegraphics[scale=0.35]{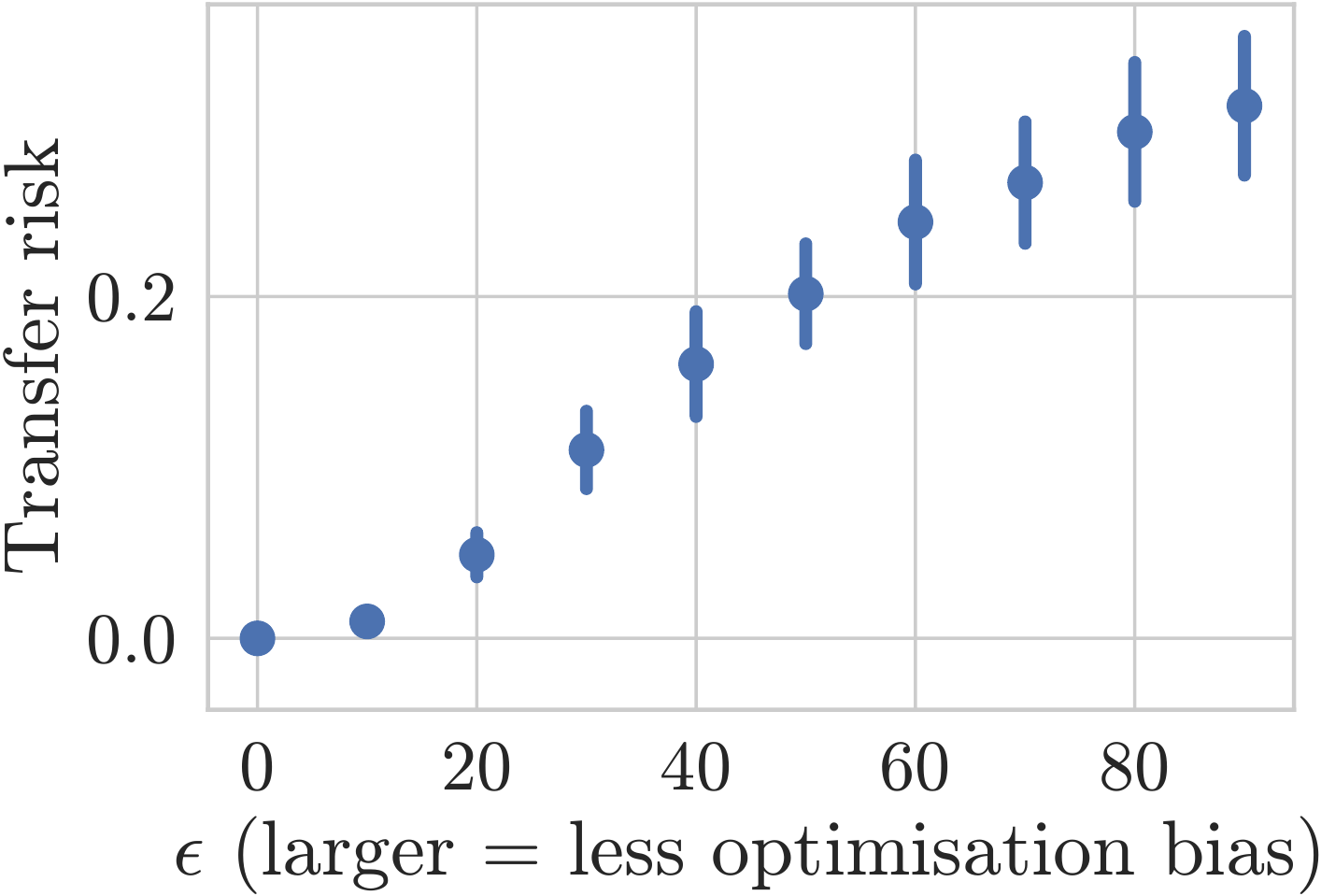}
  \caption{
    Transfer risk of linear distillation variants with different degrees of optimization bias, on the digits $0$ and $1$ of MNIST. 
  }
  \label{fig:exp-optim-bias}
\end{figure}

\subsection{Strong Monotonicity}
The third key factor we identify is \emph{strong monotonicity}, 
as established in Lemma~\ref{lemma:improvement}: training the student 
on more data always leads to a better approximation of the teacher's 
weight vector. 

Compared to data geometry and optimisation bias, strong monotonicity 
is less amenable to experimental study because it is a downstream 
property that cannot directly be manipulated. 
We therefore take an indirect approach.
We consider a set of learners including the gradient-descent distillation learner, the hard-target learner, and several learners with reduced optimisation bias (as in Section~\ref{sec:exp-optim-bias}), and train them on the same task.
For each learner, we note its expected risk calculated on 
a held-out set, and its \emph{monotonicity index}, defined as the 
probability that an additional training example  
reduces the angle between the student's and the teacher's 
weight vectors rather than increasing it, \ie 
%
\begin{equation}
  m(\fw) = \p_{\substack{ \fX\sim P_\fx^n \\ \fx\sim P_\fx}} \brackets{ \bar\alpha(\fw_*,\fw([\fX,\fx])) < \bar\alpha(\fw_*,\fw(\fX)) },
\end{equation}
where the student's weight vector $\fw$ is now treated as a function of the training set.
Thus, we can relate a learner's risk and its monotonicity.

We train the learners on the polynomial-angle task $(P_\fx^\kappa, \fw_*^\kappa)$ from Section~\ref{sec:exp-data-geom}, with $\kappa=1, d=100$ and $n=5$.
The expected risk as well as the monotonicity index are estimated as averages over 1000 transfer sets.

The results are shown in Figure~\ref{fig:exp-monotonicity}.
There is a negative correlation between monotonicity and transfer risk, 
which supports the intuition of monotonicity as a desirable property and a possible explanation of distillation's success.

However, a few reservations are in order. First, as mentioned above, 
monotonicity cannot easily be manipulated, so its \emph{effect} on 
transfer risk remains unknown. We can only measure correlation.
Second, monotonicity is of binary nature; it only captures \emph{whether} an extra data point helps or not.
Yet for a quantitative characterization of risk, one would have to capture \emph{by how much} an extra data point helps.
We leave more refined definitions of monotonicity for future work.

\begin{figure}[t]
  \centering
  \includegraphics[scale=0.45]{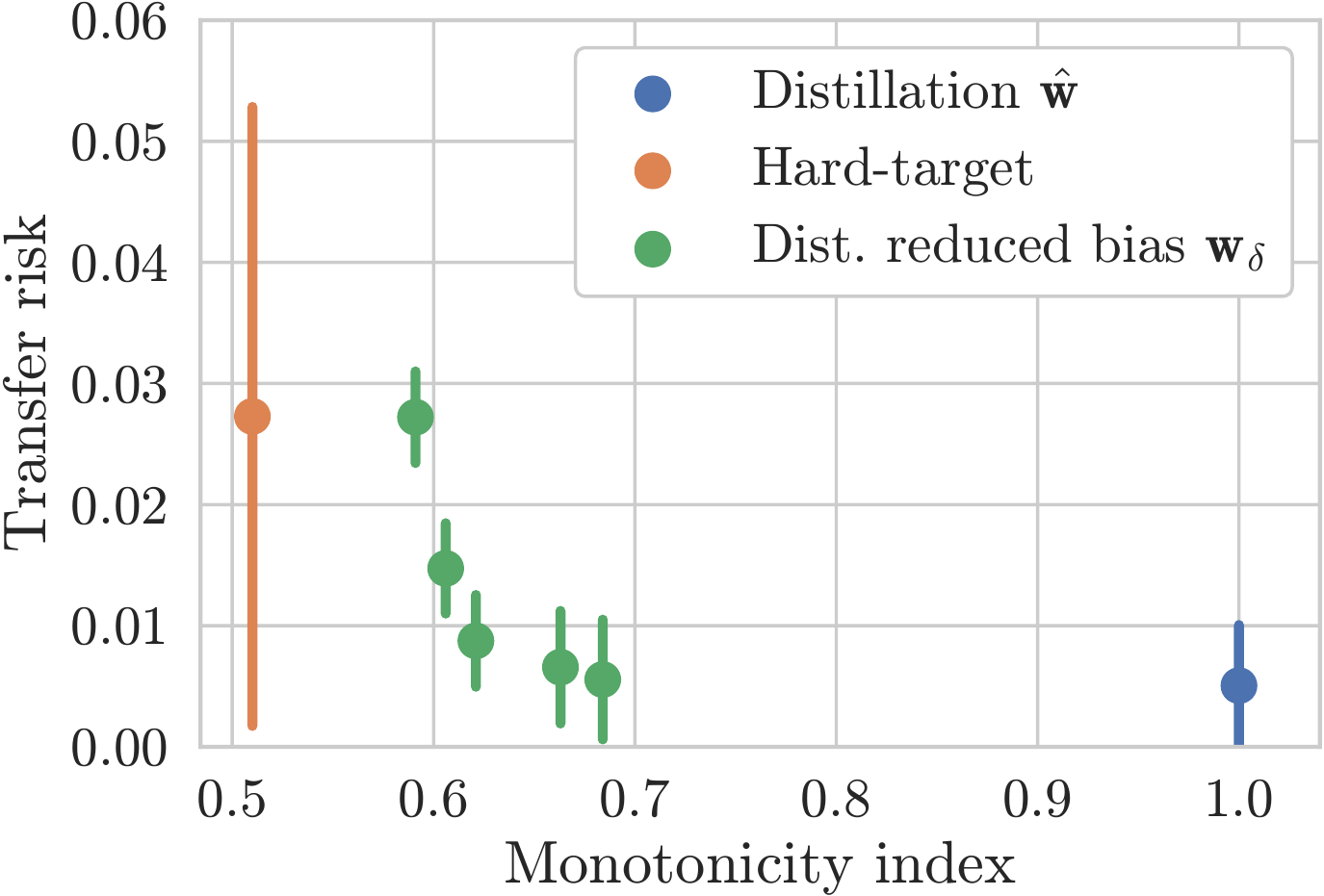}  
  \caption{
    Expected transfer risk vs. monotonicity of different learners: gradient-descent based distillation (blue), hard-target learner (orange), and a series of distillation learners with reduced optimisation bias (green): $\fw_\delta$ for $\delta\in\braces{1/16, 1/8, 1/4, 1/2, 1}$, listed in order from left to right.
  }
  \label{fig:exp-monotonicity}
\end{figure}

\section{Conclusion}

In this work, we have formulated and studied a linear model 
of knowledge distillation.
Within this model, we have derived
a)~\emph{a~characterization of the solution learned by the student},
b)~\emph{a~bound on the transfer risk, meaningful even in the 
low-data regime},
and c)~\emph{three key factors that explain the success of distillation}.
In doing so, we hope to have enriched both the current intuitive 
and theoretical understanding of distillation, both of which have 
only been weakly developed.
%

Our work paints a picture of distillation as an extremely effective 
method for knowledge transfer that derives its power from an 
optimization bias of gradient-based methods initialized near 
the origin, which in particular has the effect that any 
additionally included training point can only improve
the student's approximation of the teacher.
Distillation further benefits strongly from a favorable data 
geometry, in particular a margin between classes. 

While we have supported this picture by theory and empirical work 
only in the linear case, we hypothesize that similar properties 
also govern the behavior of distillation in the nonlinear setting.
If this hypothesis turns out to be true, it would have implications for the design of transfer sets (a large teacher model being stored along with only the minimal dataset necessary for future transfer) or active learning (which samples are most informative to have labeled by the teacher).
Potentially, strong monotonicity could serve as a leading design principle for new sample-efficient algorithms. 
We thus consider the extension to nonlinear models the main direction for future work.

\clearpage





\bibliography{b}

\begin{thebibliography}{29}
\providecommand{\natexlab}[1]{#1}
\providecommand{\url}[1]{\texttt{#1}}
\expandafter\ifx\csname urlstyle\endcsname\relax
  \providecommand{\doi}[1]{doi: #1}\else
  \providecommand{\doi}{doi: \begingroup \urlstyle{rm}\Url}\fi

\bibitem[Arora et~al.(2018)Arora, Cohen, and Hazan]{arora18}
Arora, S., Cohen, N., and Hazan, E.
\newblock On the optimization of deep networks: Implicit acceleration by
  overparameterization.
\newblock In \emph{International Conference on Machine Learing (ICML)}, 2018.

\bibitem[Arora et~al.(2019)Arora, Cohen, Golowich, and Hu]{arora19}
Arora, S., Cohen, N., Golowich, N., and Hu, W.
\newblock A convergence analysis of gradient descent for deep linear neural
  networks.
\newblock In \emph{International Conference on Learning Representations
  (ICLR)}, 2019.

\bibitem[Ba \& Caruana(2014)Ba and Caruana]{ba14}
Ba, J. and Caruana, R.
\newblock Do deep nets really need to be deep?
\newblock In \emph{Conference on Neural Information Processing Systems (NIPS)},
  2014.

\bibitem[Bucilua et~al.(2006)Bucilua, Caruana, and Niculescu-Mizil]{bucilua06}
Bucilua, C., Caruana, R., and Niculescu-Mizil, A.
\newblock Model compression.
\newblock In \emph{Conference on Knowledge Discovery and Data Mining (KDD)},
  2006.

\bibitem[Celik et~al.(2017)Celik, Lopez-Paz, and McDaniel]{celik2017patient}
Celik, Z.~B., Lopez-Paz, D., and McDaniel, P.
\newblock Patient-driven privacy control through generalized distillation.
\newblock In \emph{IEEE Symposium on Privacy-Aware Computing (PAC)}, 2017.

\bibitem[Craven \& Shavlik(1996)Craven and Shavlik]{craven96}
Craven, M. and Shavlik, J.~W.
\newblock Extracting tree-structured representations of trained networks.
\newblock In \emph{Conference on Neural Information Processing Systems (NIPS)},
  1996.

\bibitem[Geras et~al.(2016)Geras, Mohamed, Caruana, Urban, Wang, Aslan,
  Philipose, Richardson, and Sutton]{geras15}
Geras, K.~J., Mohamed, A.-R., Caruana, R., Urban, G., Wang, S., Aslan, O.,
  Philipose, M., Richardson, M., and Sutton, C.
\newblock Blending {LSTM}s into {CNN}s.
\newblock In \emph{International Conference on Learning Representations (ICLR)
  Workshop}, 2016.

\bibitem[Hardt \& Ma(2017)Hardt and Ma]{hardt17}
Hardt, M. and Ma, T.
\newblock Identity matters in deep learning.
\newblock In \emph{International Conference on Learning Representations
  (ICLR)}, 2017.

\bibitem[Hinton et~al.(2014)Hinton, Vinyals, and Dean]{hinton14}
Hinton, G., Vinyals, O., and Dean, J.
\newblock Distilling the knowledge in a neural network.
\newblock In \emph{Deep Learning Workshop at NIPS}, 2014.

\bibitem[Howard et~al.(2017)Howard, Zhu, Chen, Kalenichenko, Wang, Weyand,
  Andreetto, and Adam]{howard17}
Howard, A.~G., Zhu, M., Chen, B., Kalenichenko, D., Wang, W., Weyand, T.,
  Andreetto, M., and Adam, H.
\newblock Mobile{N}ets: Efficient convolutional neural networks for mobile
  vision applications.
\newblock In \emph{ar{X}iv:1704.04861}, 2017.

\bibitem[Hu et~al.(2016)Hu, Ma, Liu, Hovy, and Xing]{hu16}
Hu, Z., Ma, X., Liu, Z., Hovy, E., and Xing, E.
\newblock Harnessing deep neural networks with logic rules.
\newblock In \emph{Annual Meeting of the Association for Computational
  Linguistics (ACL)}, 2016.

\bibitem[Kawaguchi(2016)]{kawaguchi16}
Kawaguchi, K.
\newblock Deep learning without poor local minima.
\newblock In \emph{Conference on Neural Information Processing Systems (NIPS)},
  2016.

\bibitem[Li et~al.(2014)Li, Zhao, Huang, and Gong]{li14}
Li, J., Zhao, R., Huang, J.-T., and Gong, Y.
\newblock Learning small-size {DNN} with output-distribution-based criteria.
\newblock In \emph{Conference of the International Speech Communication
  Association (Interspeech)}, 2014.

\bibitem[Li et~al.(2017)Li, Yang, Song, Cao, Luo, and Li]{li17}
Li, Y., Yang, J., Song, Y., Cao, L., Luo, J., and Li, L.-J.
\newblock Learning from noisy labels with distillation.
\newblock In \emph{International Conference on Computer Vision (ICCV)}, 2017.

\bibitem[Liang et~al.(2008)Liang, Daum{\'e}~{III}, and Klein]{liang08}
Liang, P., Daum{\'e}~{III}, H., and Klein, D.
\newblock Structure compilation: trading structure for features.
\newblock In \emph{International Conference on Machine Learing (ICML)}, 2008.

\bibitem[Liu \& Zhu(2016)Liu and Zhu]{liu16}
Liu, J. and Zhu, X.
\newblock The teaching dimension of linear learners.
\newblock \emph{Journal of Machine Learning Research (JMLR)}, 17\penalty0
  (1):\penalty0 5631--5655, 2016.

\bibitem[Lopez-Paz et~al.(2016)Lopez-Paz, Bottou, Sch{\"o}lkopf, and
  Vapnik]{lopezpaz16}
Lopez-Paz, D., Bottou, L., Sch{\"o}lkopf, B., and Vapnik, V.
\newblock Unifying distillation and privileged information.
\newblock In \emph{International Conference on Learning Representations
  (ICLR)}, 2016.

\bibitem[Papernot et~al.(2016)Papernot, McDaniel, Wu, Jha, and
  Swami]{papernot16}
Papernot, N., McDaniel, P., Wu, X., Jha, S., and Swami, A.
\newblock Distillation as a defense to adversarial perturbations against deep
  neural networks.
\newblock In \emph{IEEE Symposium on Security and Privacy (S\&P)}, 2016.

\bibitem[Polino et~al.(2018)Polino, Pascanu, and Alistarh]{polino18}
Polino, A., Pascanu, R., and Alistarh, D.
\newblock Model compression via distillation and quantization.
\newblock In \emph{International Conference on Learning Representations
  (ICLR)}, 2018.

\bibitem[Romero et~al.(2015)Romero, Ballas, Kahou, Chassang, Gatta, and
  Bengio]{romero15}
Romero, A., Ballas, N., Kahou, S.~E., Chassang, A., Gatta, C., and Bengio, Y.
\newblock Fitnets: Hints for thin deep nets.
\newblock In \emph{International Conference on Learning Representations
  (ICLR)}, 2015.

\bibitem[Rusu et~al.(2016)Rusu, Colmenarejo, Gulcehre, Desjardins, Kirkpatrick,
  Pascanu, Mnih, Kavukcuoglu, and Hadsell]{rusu15}
Rusu, A.~A., Colmenarejo, S.~G., Gulcehre, C., Desjardins, G., Kirkpatrick, J.,
  Pascanu, R., Mnih, V., Kavukcuoglu, K., and Hadsell, R.
\newblock Policy distillation.
\newblock In \emph{International Conference on Learning Representations
  (ICLR)}, 2016.

\bibitem[Saxe et~al.(2014)Saxe, McClelland, and Ganguli]{saxe14}
Saxe, A.~M., McClelland, J.~L., and Ganguli, S.
\newblock Exact solutions to the nonlinear dynamics of learning in deep linear
  neural networks.
\newblock In \emph{International Conference on Learning Representations
  (ICLR)}, 2014.

\bibitem[Sch\"{o}lkopf \& Smola(2002)Sch\"{o}lkopf and
  Smola]{schoelkopf2002lwk}
Sch\"{o}lkopf, B. and Smola, A.~J.
\newblock \emph{Learning With Kernels}.
\newblock MIT Press, 2002.

\bibitem[Tang et~al.(2016)Tang, Wang, and Zhang]{tang16}
Tang, Z., Wang, D., and Zhang, Z.
\newblock Recurrent neural network training with dark knowledge transfer.
\newblock In \emph{IEEE International Conference on Acoustics, Speech and
  Signal Processing (ICASSP)}, 2016.

\bibitem[Urner et~al.(2011)Urner, Shalev-Shwartz, and Ben-David]{urner11}
Urner, R., Shalev-Shwartz, S., and Ben-David, S.
\newblock Access to unlabeled data can speed up prediction time.
\newblock In \emph{International Conference on Machine Learing (ICML)}, 2011.

\bibitem[Vapnik \& Izmailov(2015)Vapnik and Izmailov]{vapnik15}
Vapnik, V. and Izmailov, R.
\newblock Learning using privileged information: similarity control and
  knowledge transfer.
\newblock \emph{Journal of Machine Learning Research (JMLR)}, 16\penalty0
  (2):\penalty0 2023--2049, 2015.

\bibitem[Yu et~al.(2017)Yu, Li, Morariu, and Davis]{yu17}
Yu, R., Li, A., Morariu, V.~I., and Davis, L.~S.
\newblock Visual relationship detection with internal and external linguistic
  knowledge distillation.
\newblock In \emph{International Conference on Computer Vision (ICCV)}, 2017.

\bibitem[Zhu(2013)]{zhu13}
Zhu, J.
\newblock Machine teaching for {B}ayesian learners in the exponential family.
\newblock In \emph{Conference on Neural Information Processing Systems (NIPS)},
  2013.

\bibitem[Zhu(2015)]{zhu15}
Zhu, X.
\newblock Machine teaching: An inverse problem to machine learning and an
  approach toward optimal education.
\newblock In \emph{AAAI Conference on Artificial Intelligence}, 2015.

\end{thebibliography}
\bibliographystyle{icml2019}


\newpage
\appendix
\twocolumn[
\icmltitle{Supplementary Material}]

\numberwithin{assumption}{section}
\numberwithin{corollary}{section}
\numberwithin{lemma}{section}
\numberwithin{theorem}{section}

We define here some notation in addition to that of Section~\ref{sec:setup} in the main text.
We denote by $\ell_i$ the per-instance loss, 
\begin{align} \label{eqA:loss}
  L^1(\fw) &= \frac 1 n \sum_{i=1}^n \ell_i(\fw\T\fx_i), \\
  \ell_i(u) &= -y_i\log\sigma(u) - (1-y_i)\log(1-\sigma(u)) - \ell_i^*,
\end{align}
where $\ell_i^*$ are constants chosen such that the minimum of $\ell_i$ is 0, namely
$\ell_i^* = - y_i\log y_i - (1-y_i)\log(1-y_i)$.

Slightly abusing notation, we write
$L(\tau) = L^1(\fw(\tau)) = L(\fW_1(\tau),\dots,\fW_N(\tau)) $
for the objective value at time $\tau$.

Finally, for a full-rank matrix $\fA\in\R^{d\times m}\ (m\geq 1)$, we denote by $\fP_\fA\in \R^{d\times d}$ the matrix of projection onto the span of $\fA$,
\begin{equation}
  \fP_\fA = \left\{
  \begin{array}{cl}
    \fI, & m\geq d, \\
    \fA(\fA\T\fA)^{-1}\fA\T, & m < d.
  \end{array}\right.
\end{equation}

\section{Properties of the Cross-Entropy Loss}
\begin{theorem}[Gradient] \label{thm:gradient}
  The gradient of the cross-entropy loss~(\ref{eqA:loss}) takes the form
  \begin{equation}
    \nabla L^1(\fw) = \frac 1 n \sum_{i=1}^n (\sigma(\fw\T\fx_i)-y_i) \cdot \fx_i.
  \end{equation}
  It always lies in the data span, 
  $\nabla L^1(\fw) \in \mathrm{span}(\fX)$.  
\end{theorem}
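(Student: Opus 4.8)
The plan is to differentiate $L^1$ term by term using the chain rule, reducing the whole computation to the scalar derivative of the per-instance loss $\ell_i$. First I would observe that in the definition of $\ell_i$, the additive constant $\ell_i^*$ does not depend on $\fw$, so it contributes nothing to the gradient and can be ignored. Since $L^1(\fw) = \frac 1n \sum_i \ell_i(\fw\T\fx_i)$ is a sum of compositions of $\ell_i$ with the linear map $\fw\mapsto \fw\T\fx_i$, the chain rule gives $\nabla L^1(\fw) = \frac 1n \sum_i \ell_i'(\fw\T\fx_i)\,\fx_i$, so everything hinges on computing $\ell_i'(u)$.

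The one genuine (but routine) computation is this scalar derivative, and the key tool is the sigmoid identity $\sigma'(u) = \sigma(u)(1-\sigma(u))$, which yields $\dd{}{u}\log\sigma(u) = 1-\sigma(u)$ and $\dd{}{u}\log(1-\sigma(u)) = -\sigma(u)$. Differentiating $\ell_i(u) = -y_i\log\sigma(u) - (1-y_i)\log(1-\sigma(u)) - \ell_i^*$ and collecting terms then gives, after the cross terms in $y_i\sigma(u)$ cancel,
\begin{equation}
  \ell_i'(u) = -y_i(1-\sigma(u)) + (1-y_i)\sigma(u) = \sigma(u) - y_i.
\end{equation}
Substituting $u = \fw\T\fx_i$ back into the chain-rule expression immediately produces the claimed formula $\nabla L^1(\fw) = \frac 1n \sum_i (\sigma(\fw\T\fx_i) - y_i)\,\fx_i$.

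For the second assertion, I would simply note that each summand is a scalar multiple of the column $\fx_i$, so the gradient is a linear combination of the columns of $\fX$ and therefore lies in $\mathrm{span}(\fX)$. This is the observation that drives the whole closed-form analysis, since it forces the entire gradient-flow trajectory (started in the span, as in Theorem~\ref{thm:closed-form-N1}) to remain within $\mathrm{span}(\fX)$. I do not expect any real obstacle here: the only point requiring care is the sigmoid-derivative bookkeeping above, and verifying that the $\ell_i^*$ normalization is indeed $\fw$-independent so that it drops out cleanly.
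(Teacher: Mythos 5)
Your proposal is correct and matches the paper's approach: the paper dispatches this result with the single line ``Straightforward calculation,'' and your chain-rule computation via the sigmoid identity $\sigma'(u)=\sigma(u)(1-\sigma(u))$, together with the observation that the gradient is a linear combination of the columns of $\fX$, is exactly the calculation being elided. No gaps.
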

\begin{proof}
  Straightforward calculation.
\end{proof}

\begin{theorem}[Global minima] \label{thm:global-opt}
  The global minimum of the cross-entropy loss~(\ref{eqA:loss}) is 0
  and the set of global minimisers is
  \begin{equation} \label{eq:set-of-opt}
    \braces{\fw\in\R^d: \fX\T\fw = \fX\T\fw_*}.
  \end{equation}
\end{theorem}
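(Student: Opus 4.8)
The plan is to reduce the statement to a per-instance analysis of the scalar functions $\ell_i$, exploiting that the soft labels satisfy $y_i = \sigma(\fw_*\T\fx_i) \in (0,1)$. The key observation is that each $\ell_i$ is a strictly convex function of its scalar argument $u$, minimized exactly when $\sigma(u) = y_i$, and that the additive constant $\ell_i^*$ was chosen precisely so that this minimum value equals zero.

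First I would analyze a single $\ell_i$. Differentiating $\ell_i(u) = -y_i\log\sigma(u) - (1-y_i)\log(1-\sigma(u)) - \ell_i^*$ and using $\sigma'(u) = \sigma(u)(1-\sigma(u))$ gives $\ell_i'(u) = \sigma(u) - y_i$, consistent with the gradient formula of Theorem~\ref{thm:gradient}. Hence $\ell_i''(u) = \sigma'(u) > 0$, so $\ell_i$ is strictly convex and its unique stationary point---the global minimizer---is the $u$ solving $\sigma(u) = y_i$. Because $\sigma$ is a strictly increasing bijection from $\R$ onto $(0,1)$ and $y_i$ lies in the open interval, this equation has the unique solution $u_i^\star = \sigma^{-1}(y_i) = \fw_*\T\fx_i$. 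Substituting back and using the explicit value $\ell_i^* = -y_i\log y_i - (1-y_i)\log(1-y_i)$ shows $\ell_i(u_i^\star) = 0$; equivalently, one may read $\ell_i$ as the relative entropy between the Bernoulli distribution with parameter $y_i$ and that with parameter $\sigma(u)$, which is nonnegative and vanishes exactly at $\sigma(u)=y_i$. Either way, $\ell_i(u) \geq 0$ for all $u$, with equality iff $u = \fw_*\T\fx_i$.

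Summing over $i$ then finishes the argument: $L^1(\fw) = \frac1n\sum_i \ell_i(\fw\T\fx_i) \geq 0$, so the global minimum is $0$, and it is attained iff every term vanishes, i.e.\ iff $\fw\T\fx_i = \fw_*\T\fx_i$ for all $i$. Collecting these $n$ scalar equations into matrix form yields exactly $\fX\T\fw = \fX\T\fw_*$, which is the claimed set~\eqref{eq:set-of-opt}. There is no substantial obstacle here---the only point requiring a moment's care is that the soft labels are strictly interior to $(0,1)$, which guarantees both that the entropy constant $\ell_i^*$ is finite and that $\sigma(u) = y_i$ is solvable, so that the per-instance minimum is genuinely attained and equal to zero rather than an unattained infimum.
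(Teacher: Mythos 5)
Your proof is correct and follows essentially the same route as the paper's: both reduce to the per-instance observation that $\ell_i\geq 0$ with equality iff $\sigma(\fw\T\fx_i)=y_i=\sigma(\fw_*\T\fx_i)$, and then use the injectivity of $\sigma$ to obtain $\fX\T\fw=\fX\T\fw_*$. You merely spell out in more detail (via the derivative computation and the KL-divergence reading of $\ell_i$) what the paper asserts tersely as ``$L^1\geq 0$ and $L^1(\fw_*)=0$.''
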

\begin{proof}
  We know that $L^1\geq 0$ and $L^1(\fw_*)=0$, so 0 is the optimal objective value, and the set of global optima consists of all $\fw$ such that $L^1(\fw)=0$.
  The last condition is equivalent to $\forall_i: \ell_i(\fw)=0$, which in turn is equivalent to $\forall_i: \sigma(\fw\T\fx_i) = \sigma(\fw_*\T\fx_i)$.
  By monotonicity of $\sigma$, this is further equivalent to $\forall_i:\fw\T\fx_i = \fw_*\T\fx_i$, which is a restatement of (\ref{eq:set-of-opt}).
\end{proof}

\begin{theorem}[Restricted strong convexity] \label{thm:strong-convexity}
  Assume $\fX$ is full-rank.
  For any sublevel set $\cW=\braces{\fw: L^1(\fw)\leq l}$, there exists $\mu>0$ such that 
  \begin{multline} \label{eq:A-strong-convexity}
    L^1(\fv) \geq L^1(\fw) + \nabla L^1(\fw)\T(\fv-\fw)
    + \frac \mu 2 \norm{\fv-\fw}^2
  \end{multline}
  for all $\fw,\fv\in\cW$ such that $\fv-\fw\in\mathrm{span}(\fX)$.
\end{theorem}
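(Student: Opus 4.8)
The plan is to derive the inequality from a second-order Taylor expansion of $L^1$ together with a uniform lower bound on the Hessian restricted to directions lying in $\mathrm{span}(\fX)$. Differentiating the gradient of Theorem~\ref{thm:gradient} once more yields the Hessian
\begin{equation}
  \nabla^2 L^1(\fw) = \frac 1 n \sum_{i=1}^n \sigma'(\fw\T\fx_i)\,\fx_i\fx_i\T ,
\end{equation}
where $\sigma'(u)=\sigma(u)(1-\sigma(u))>0$. Since $L^1$ is $C^2$, Taylor's theorem with integral remainder applied to $t\mapsto L^1(\fw+t(\fv-\fw))$ gives, writing $\fu:=\fv-\fw$,
\begin{multline} \label{eq:taylor-plan}
  L^1(\fv) = L^1(\fw) + \nabla L^1(\fw)\T\fu \\
  {}+ \int_0^1 (1-t)\,\fu\T\nabla^2 L^1(\fw+t\fu)\,\fu\,\mathrm{d}t .
\end{multline}
Because $\int_0^1(1-t)\,\mathrm{d}t=\tfrac12$, it suffices to exhibit some $\mu>0$ with $\fu\T\nabla^2 L^1(\xi)\,\fu\geq \mu\norm{\fu}^2$ for every $\fu\in\mathrm{span}(\fX)$ and every $\xi$ on the segment between $\fw$ and $\fv$.

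The crux is to bound $\sigma'(\xi\T\fx_i)$ away from $0$ uniformly over $\xi\in\cW$. I would first note that $L^1$ is convex (a sum of one-dimensional convex functions of $\fw\T\fx_i$), so the sublevel set $\cW$ is convex; hence every segment point $\xi=\fw+t\fu$ lies in $\cW$ and satisfies $L^1(\xi)\leq l$. Since the per-instance losses are nonnegative, this forces $\ell_i(\xi\T\fx_i)\leq n\,l$ for each $i$. The key structural point is that, because $y_i=\sigma(\fw_*\T\fx_i)\in(0,1)$ \emph{strictly} (the sigmoid never attains $0$ or $1$), each $\ell_i$ is a strictly convex function of its scalar argument whose two asymptotic slopes, $1-y_i>0$ and $-y_i<0$, are both nonzero; consequently the one-dimensional sublevel set $\braces{u:\ell_i(u)\leq n\,l}$ is a \emph{bounded} interval $[a_i,b_i]$. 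On each compact $[a_i,b_i]$ the continuous, strictly positive function $\sigma'$ attains a positive minimum $m_i$, and I set $m:=\min_i m_i>0$. This delivers $\nabla^2 L^1(\xi)\succeq \frac{m}{n}\fX\fX\T$ for all $\xi\in\cW$.

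It then remains to handle the span restriction. Since $\fX$ is full-rank, $\fX\fX\T$ is positive definite on its range, which is exactly $\mathrm{span}(\fX)$; letting $\lambda>0$ be its smallest nonzero eigenvalue, we have $\fu\T\fX\fX\T\fu\geq\lambda\norm{\fu}^2$ for all $\fu\in\mathrm{span}(\fX)$. Combining the two bounds, for such $\fu$ and any segment point $\xi$,
\begin{equation}
  \fu\T\nabla^2 L^1(\xi)\,\fu \;\geq\; \frac{m}{n}\,\fu\T\fX\fX\T\fu \;\geq\; \frac{m\lambda}{n}\,\norm{\fu}^2 ,
\end{equation}
so $\mu=m\lambda/n$ works, and substituting into \eqref{eq:taylor-plan} closes the argument. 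I expect the uniform positivity of $m$ to be the main obstacle and the heart of the statement: globally no such bound exists, since $\sigma'(u)\to0$ as $\abs{u}\to\infty$, which is precisely why passing to a sublevel set $\cW$ is indispensable, and the boundedness of the one-dimensional sublevel sets in turn hinges on the strict inequality $y_i\in(0,1)$. The restriction $\fu\in\mathrm{span}(\fX)$ is the other essential ingredient, since $\nabla^2 L^1$ is only positive \emph{semi}definite on $\R^d$ and degenerates in directions orthogonal to the data.
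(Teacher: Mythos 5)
Your proof is correct and follows essentially the same route as the paper's: a second-order Taylor expansion, a uniform positive lower bound $m$ on $\sigma'(\xi\T\fx_i)$ over the sublevel set (obtained from the boundedness of the one-dimensional sublevel sets of the $\ell_i$, which you justify in slightly more detail via $y_i\in(0,1)$), and then a lower bound on $\fu\T\fX\fX\T\fu$ for $\fu\in\mathrm{span}(\fX)$. The only cosmetic differences are your use of the integral-remainder form of Taylor's theorem and your unified treatment via the smallest \emph{nonzero} eigenvalue of $\fX\fX\T$, where the paper splits into the cases $n\geq d$ and $n<d$; these yield the same constant.
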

\begin{proof}
  Consider the 2nd-order Taylor expansion of $L^1$ around $\fw$,
  \begin{multline} \label{eq:taylor}
    L^1(\fv) = L^1(\fw) + \nabla L^1(\fw)\T(\fv - \fw) \\
    + \frac 1 2(\fv - \fw)\T [\nabla^2L^1(\bar\fw)] (\fv - \fw),
  \end{multline}
  where $\nabla^2L^1(\bar\fw)$ is the Hessian of $L^1$ evaluated at $\bar\fw$, a point lying between $\fv$ and $\fw$.
  A straightforward calculation shows that the Hessian takes the form
  \begin{equation}
   \nabla^2 L^1(\bar\fw) = \fX\fD_{\bar\fw}\fX\T, 
  \end{equation}
  where
  \begin{multline}
    \fD_{\bar\fw} = \mathrm{diag}[\sigma(\bar\fw\T\fx_1)(1-\sigma(\bar\fw\T\fx_1)), \\
    \dots,\sigma(\bar\fw\T\fx_n)(1-\sigma(\bar\fw\T\fx_n))].
  \end{multline}
  
  We will now show that there is a constant $\omega>0$ such that
  \begin{equation} \label{eq:var-bounded}
    \sigma(\bar\fw\T\fx_i)(1 - \sigma(\bar\fw\T\fx_i)) \geq \omega
  \end{equation}
  for all $\bar\fw\in\cW$ and $i\in\braces{1,\dots,n}$,
  so that we can claim $\fD_{\bar\fw}\succeq \omega\fI$,
  or consequently $\nabla^2 L^1(\bar\fw) \succeq \omega\fX\fX\T$.

  Let $\fw\in\cW$.
  The bound on $L^1(\fw)$ implies a bound on $\ell_i(\fw\T\fx_i)$ for all $i$,
  \begin{equation}
    \ell_i(\fw\T\fx_i) \leq nL^1(\fw) \leq nl.
  \end{equation}
  Because $\ell_i$ is convex and $\ell_i(u)\to\infty$ as $u\to\pm\infty$,
  we know that $\ell_i^{-1}((-\infty,nl])$ is a bounded interval, and the finite union
  $\cup_{i=1}^n \ell_i^{-1}((-\infty,nl])$ is also a bounded interval, whose size depends only on $nl$ and the data.
  Hence, there exists $K>0$ such that $\fw\T\fx_i \in [-K, K]$ for all $\fw\in\cW$ and $i\in\braces{1,\dots,n}$.
  The existence of $\omega>0$ satisfying (\ref{eq:var-bounded}) follows.

  Now, let us apply $\nabla^2L^1(\fw)\succeq \omega\fX\fX\T$ to lower-bound (\ref{eq:taylor}):
  \begin{multline} \label{eq:0}
    L^1(\fv) \geq L^1(\fw) + \nabla L^1(\fw)\T(\fv-\fw) \\
    + \frac \omega 2 (\fv - \fw)\T\fX\fX\T(\fv - \fw).
  \end{multline}
  Consider two cases.
  If $n\geq d$, $\fX\fX\T$ is full-rank and $\fX\fX\T \succeq \lambda_\mathrm{min}\fI$ holds, where $\lambda_\mathrm{min}>0$ is the smallest eigenvalue of $\fX\fX\T$.
  Combined with (\ref{eq:0}), this proves the claim for $n\geq d$ and $\mu=\omega\lambda_\mathrm{min}$.

  If $n<d$, $\fX\T\fX$ is full rank. We can use the assumption $\fv-\fw\in\mathrm{span}(\fX)$ to deduce
  \begin{align}
    \begin{split}
    \norm{\fv-\fw}^2 &= \norm{\fP_\fX(\fv-\fw)}^2 \\ 
                     &= (\fv-\fw)\T \fX(\fX\T\fX)^{-1}\fX\T (\fv-\fw) \\
                     &\leq \lambda_\mathrm{max} (\fv-\fw)\T\fX\fX\T(\fv-\fw),
    \end{split}
  \end{align}
  where $\lambda_\mathrm{max}>0$ is the largest eigenvalue of $(\fX\T\fX)^{-1}$.
  Combined with (\ref{eq:0}), this proves the claim for $n<d$ and $\mu=\omega/\lambda_\mathrm{max}$.
\end{proof}

\begin{corollary}[Restricted Polyak-Lojasiewicz] \label{cor:pl}
  Assume $\fX$ is full-rank.
  For any sublevel set $\cW = \braces{\fw: L^1(\fw)\leq l}$, there exists $c>0$ such that
  \begin{equation}
    cL^1(\fw) \leq \frac 1 2 \norm{\nabla L^1(\fw)}^2
  \end{equation}
  for all $\fw\in\cW$.
\end{corollary}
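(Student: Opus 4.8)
The plan is to derive the restricted Polyak--\L ojasiewicz (PL) inequality directly from the restricted strong convexity of Theorem~\ref{thm:strong-convexity}, exploiting the fact (Theorem~\ref{thm:gradient}) that $\nabla L^1(\fw)$ always lies in $\mathrm{span}(\fX)$. The standard route from $\mu$-strong convexity to a PL inequality with the same constant $\mu$ is to substitute a global minimiser into the strong-convexity inequality and then optimise the right-hand side. The only care needed here is that strong convexity holds \emph{only} along directions in $\mathrm{span}(\fX)$ and \emph{only} inside the sublevel set $\cW$, so I must ensure the minimiser I feed in respects both restrictions.

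First I would locate a global minimiser $\hat\fw$ reachable from $\fw$ along the data span. By Theorem~\ref{thm:global-opt}, the minimiser set is $\braces{\fv:\fX\T\fv=\fX\T\fw_*}$ and the minimum value is $0$. Choosing $\hat\fw = \fw + \fP_\fX(\fw_*-\fw)$ gives a point with $\fX\T\hat\fw = \fX\T\fw_*$ (a one-line check using $\fX\T\fP_\fX = \fX\T$, valid in both regimes of $\fP_\fX$ since $\fX$ is full rank), hence $L^1(\hat\fw)=0$; and by construction $\hat\fw-\fw = \fP_\fX(\fw_*-\fw)\in\mathrm{span}(\fX)$. Since $L^1(\hat\fw)=0\leq l$, the point $\hat\fw$ lies in $\cW$, so both hypotheses of Theorem~\ref{thm:strong-convexity} are satisfied.

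Next I would apply restricted strong convexity with $\fv=\hat\fw$. Writing $\fg=\nabla L^1(\fw)$ and $\fd=\hat\fw-\fw$ and using $L^1(\hat\fw)=0$, this gives
\begin{equation*}
0 \;\geq\; L^1(\fw) + \fg\T\fd + \frac{\mu}{2}\norm{\fd}^2,
\end{equation*}
so that $L^1(\fw)\leq -\fg\T\fd - \frac{\mu}{2}\norm{\fd}^2$. Bounding $-\fg\T\fd \leq \norm{\fg}\norm{\fd}$ by Cauchy--Schwarz and then maximising the scalar quadratic $\norm{\fg}\,s - \frac{\mu}{2}s^2$ over $s=\norm{\fd}\geq 0$ (maximum $\norm{\fg}^2/(2\mu)$, attained at $s=\norm{\fg}/\mu$) yields $L^1(\fw)\leq \frac{1}{2\mu}\norm{\nabla L^1(\fw)}^2$. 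This is precisely the claim with $c=\mu$, where $\mu>0$ is the restricted-strong-convexity constant of $\cW$ supplied by Theorem~\ref{thm:strong-convexity}.

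The only genuinely delicate point is the first step: I must guarantee that a global optimum exists in the affine slice $\fw+\mathrm{span}(\fX)$ and that it lies in $\cW$. Both follow from $\fX$ being full rank---which makes $\fP_\fX(\fw_*-\fw)$ well defined and forces $\fX\T\hat\fw=\fX\T\fw_*$---together with the optimal value being $0\leq l$. Everything after that is the textbook strong-convexity-implies-PL computation. Crucially, the restriction of strong convexity to $\mathrm{span}(\fX)$ causes no difficulty, precisely because both $\nabla L^1(\fw)$ and the chosen displacement $\hat\fw-\fw$ already live in that subspace; this is exactly where Theorem~\ref{thm:gradient} is used.
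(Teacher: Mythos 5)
Your proposal is correct and follows essentially the same route as the paper: both identify the in-span global minimiser $\fw+\fP_\fX(\fw_*-\fw)$, verify it lies in $\cW$ (its objective value is $0\le l$) with displacement in $\mathrm{span}(\fX)$, and feed it into the restricted strong convexity of Theorem~\ref{thm:strong-convexity}; the paper then minimises the resulting quadratic exactly over the affine slice $\fw+\mathrm{span}(\fX)$, while you plug in the specific minimiser and use Cauchy--Schwarz with a scalar maximisation, which yields the same constant $c=\mu$. The only minor quibble is that your closing remark overstates the role of Theorem~\ref{thm:gradient}: in your Cauchy--Schwarz variant the gradient's membership in $\mathrm{span}(\fX)$ is not actually needed, only the displacement's.
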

\begin{proof}
  Let $\fw\in\cW$. (If $\cW$ is empty, the claim is trivially true.)
  Theorem~\ref{thm:strong-convexity} applied to $\cW$ implies that for some $\mu>0$,
  \begin{equation} 
    L^1(\fv) \geq L^1(\fw) + \nabla L^1(\fw)\T(\fv-\fw) 
    + \frac {\mu} 2 \norm{\fv-\fw}^2
  \end{equation}
  for all $\fv\in\cW\cap\cV$ where $\cV = \braces{\fv: \fv-\fw\in\mathrm{span}(\fX)}$.
  Taking $\min_{\fv\in\cW\cap\cV}$ on both sides, then relaxing part of the constraint on the right-hand side yields
  \begin{align}
    \begin{split}
    &\min_{\fv\in\cW\cap\cV} L^1(\fv) \\
    &\hspace{5mm} \geq \min_{\fv\in\cW\cap\cV} L^1(\fw) + \nabla L^1(\fw)\T(\fv-\fw) + \frac \mu 2 \norm{\fv -\fw}^2 \\
    &\hspace{5mm} \geq \min_{\fv\in\cV} L^1(\fw) + \nabla L^1(\fw)\T(\fv-\fw) + \frac \mu 2 \norm{\fv -\fw}^2.
    \end{split}
  \end{align}

  Now, the minimum on the left-hand side is equal to 0 and is attained at
   $\fv = \fw + \fP_\fX(\fw_* - \fw)$,
 as can be seen from Theorem~\ref{thm:global-opt}.
 For the right-hand side, we can substitute $\fv = \fw + \fX\fa$ for $\fa\in\R^n$ and find the unconstrained minimum with respect to $\fa$.
 We get
 \begin{align}
   \begin{split}
   0 &\geq L^1(\fw) - \frac 1 {2\mu} \nabla L^1(\fw)\T \fX(\fX\T\fX)^{-1}\fX\T\nabla L^1(\fw) \\
   &\geq L^1(\fw) - \frac {\lambda_\mathrm{max}} {2\mu} \norm{\nabla L^1(\fw)}^2,     
   \end{split}
 \end{align}
 where $\lambda_\mathrm{max}>0$ is the largest eigenvalue of $\fX(\fX\T\fX)^{-1}\fX\T$.
 This yields the result with $c=\mu/\lambda_\mathrm{max}$.
\end{proof}

\section{Proof of Theorem~\ref{thm:closed-form-N1}}

We will prove a supporting lemma, and then the theorem.

\begin{lemma} \label{lemma:wt-in-span}
  Assume the student is a directly parameterised linear classifier $(N=1)$ initialised at zero, $\fw(0)=\f0$. Then, $\fw(\tau) \in\mathrm{span}(\fX)$ for $\tau\in[0,\infty)$.
\end{lemma}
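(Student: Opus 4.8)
The plan is to exploit the fact, established in Theorem~\ref{thm:gradient}, that the gradient of the loss always lies in the data span, $\nabla L^1(\fw)\in\mathrm{span}(\fX)$ for every $\fw$. For $N=1$ the gradient flow~\eqref{eq:gd} reduces to $\pd{\fw(\tau)}{\tau} = -\nabla L^1(\fw(\tau))$, so the velocity of the trajectory lies in $\mathrm{span}(\fX)$ at every time. Starting from $\fw(0)=\f0\in\mathrm{span}(\fX)$, the trajectory should therefore never leave this subspace, and the lemma is essentially immediate.

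To make this rigorous, I would decompose $\R^d$ as the orthogonal direct sum $\mathrm{span}(\fX)\oplus\mathrm{span}(\fX)^\perp$ and let $\fP^\perp$ denote orthogonal projection onto the complement $\mathrm{span}(\fX)^\perp$. The key step is to track the orthogonal component $\fP^\perp\fw(\tau)$. Differentiating and using linearity of $\fP^\perp$,
\[
  \dd{}{\tau}\fP^\perp\fw(\tau) = \fP^\perp\pd{\fw(\tau)}{\tau} = -\fP^\perp\nabla L^1(\fw(\tau)) = \f0,
\]
where the last equality is Theorem~\ref{thm:gradient}. Hence $\fP^\perp\fw(\tau)$ is constant in $\tau$, equal to its initial value $\fP^\perp\fw(0)=\fP^\perp\f0=\f0$. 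Therefore $\fw(\tau)$ has no component orthogonal to the data span, \ie $\fw(\tau)\in\mathrm{span}(\fX)$ for all $\tau\in[0,\infty)$. Alternatively—and perhaps more cleanly—I could write the trajectory in integral form, $\fw(\tau)=\fw(0)-\int_0^\tau\nabla L^1(\fw(s))\,\mathrm{d}s$, and observe that since $\fw(0)=\f0$ and every integrand lies in the closed linear subspace $\mathrm{span}(\fX)$, the integral and hence $\fw(\tau)$ lie in $\mathrm{span}(\fX)$.

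I do not anticipate a genuine obstacle here; the statement is a direct consequence of the gradient lying in the data span together with the zero initialisation. The only point requiring minor care is to ensure the gradient flow is well-defined and differentiable so that the projection argument goes through. Since $L^1$ is smooth and convex with bounded sublevel sets, standard ODE theory guarantees a unique, continuously differentiable global solution $\fw(\tau)$, which is all the argument requires.
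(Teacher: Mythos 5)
Your argument is correct and is essentially the paper's own proof: the paper tests $\fw(\tau)$ against an arbitrary vector $\fq$ orthogonal to $\mathrm{span}(\fX)$ and shows $\dd{}{\tau}(\fq\T\fw(\tau)) = -\fq\T\nabla L^1(\fw(\tau)) = 0$, which is the same computation as your projection $\fP^\perp\fw(\tau)$ being constant and zero. No substantive difference.
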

\begin{proof}
  Let $\fq\in\R^d$ be any vector orthogonal to the span of $\fX$.
  It suffices to show that $\fq\T\fw(\tau) = 0$.
  For that, notice that $\fq\T\fw(0) = 0$ and
  \begin{equation}
    \dd{}{\tau} (\fq\T\fw(\tau)) = -\fq\T\nabla L^1(\fw(\tau)) = 0,
  \end{equation}
  where the last equality follows from the fact that $\nabla L^1(\fw(\tau))\in\mathrm{span}(\fX)$ (Theorem~\ref{thm:gradient}).
  The claim follows.
\end{proof}

\closedformshallow*
\begin{proof}
  Recall the time-derivative of $L$,
  \begin{equation} \label{eq:A-dLdt}
    L'(\tau) = -\norm{\nabla L^1(\fw(\tau))}^2.
  \end{equation}
  The data matrix $\fX$ is almost surely (wrt. $\fX\sim P_\fx^n$) full-rank,
  we can therefore apply Corollary~\ref{cor:pl} to $\cW = \braces{\fw: L^1(\fw)\leq L^1(\f0)}$ and $\fw(\tau)$ to lower-bound the gradient norm on the right-hand side of~(\ref{eq:A-dLdt}). We obtain  $L'(\tau) \leq -cL(\tau)$
  for some $c>0$ and all $\tau\in[0,\infty)$,
  or equivalently,
  \begin{align}
    (\log L(\tau))' \leq -c.
  \end{align}
  Integrating over $[0, t]$ yields $L(t) \leq L(0)\cdot e^{-ct}$, which proves global convergence in the objective: $L(t)\to 0$ as $t\to\infty$.

  Now invoke Theorem~\ref{thm:strong-convexity} with $\cW$ as above, $\fv=\fw(t)$ and $\fw=\hat\fw$ (we know that both $\fw(\tau),\hat\fw\in\cW\cap\mathrm{span}(\fX)$, partly by Lemma~\ref{lemma:wt-in-span}):
  \begin{equation} \label{eq:bound-dist}
    L(t) \geq \frac\mu 2 \norm{\fw(t) - \hat\fw}^2.
  \end{equation}
  Since $L(t)\to 0$ as $t\to\infty$, the theorem follows.
\end{proof}

\section{Proof of Theorem~\ref{thm:closed-form}}

\closedformdeep*

For the proof, we will need a result by~\cite{arora18}, which
characterises the induced flow on $\fw(\tau)$ when running gradient descent on the component matrices $\fW_i$.

\begin{lemma}[{\citep[Claim 2]{arora18}}] \label{lemma:w-deriv}
  If the balancedness condition~(\ref{ass:balancedness}) holds, then
  \begin{multline}
    \pd{\fw(\tau)}{\tau} = -\norm{\fw(\tau)}^{\frac{2(N-1)}{N}}
    \left(
      \nabla L^1(\fw(\tau)) + \right. \\ \left. (N-1)\cdot \fP_{\fw(\tau)} \nabla L^1(\fw(\tau)) 
    \right).
  \end{multline}
\end{lemma}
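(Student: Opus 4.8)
The statement is \citet{arora18}'s Claim~2, so the plan is to reconstruct its short derivation. The idea is to first obtain a \emph{general} expression for the flow induced on the end-to-end matrix $W := \fw\T = \fW_N\fW_{N-1}\cdots\fW_1$ by gradient flow on the individual factors, and only afterwards specialize to the present rank-one (vector-output) case, where everything collapses to the stated scalar-times-projection form.

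First, I would differentiate the product $W$ in time. Writing $W_{>i} := \fW_N\cdots\fW_{i+1}$ and $W_{<i} := \fW_{i-1}\cdots\fW_1$ (so that $W = W_{>i}\,\fW_i\,W_{<i}$), the product rule applied to the factor dynamics~(\ref{eq:gd}), together with the chain-rule identity $\pd{L}{\fW_i} = W_{>i}\T\,\nabla_W L\,W_{<i}\T$, gives
\[
  \dd{W}{\tau} = -\sum_{i=1}^N W_{>i}W_{>i}\T\,\nabla_W L\,W_{<i}\T W_{<i},
\]
where $\nabla_W L = (\nabla L^1(\fw))\T$. The next step is to remove the factor products using balancedness. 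Condition~(\ref{ass:balancedness}) is preserved along the flow, since $\dd{}{\tau}(\fW_{j+1}\T\fW_{j+1}-\fW_j\fW_j\T)=\f0$, and it forces all factors to share an aligned singular-value structure; this yields the key identities $W_{>i}W_{>i}\T = (WW\T)^{\frac{N-i}N}$ and $W_{<i}\T W_{<i} = (W\T W)^{\frac{i-1}N}$, so that $\dd{W}{\tau} = -\sum_{i=1}^N (WW\T)^{\frac{N-i}N}\,\nabla_W L\,(W\T W)^{\frac{i-1}N}$.

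It then remains to specialize to $W=\fw\T\in\R^{1\times d}$. Here $WW\T = \norm{\fw}^2$ is a scalar and $W\T W = \fw\fw\T$ is rank one, with single nonzero eigenvalue $\norm{\fw}^2$ and eigenvector $\fw/\norm{\fw}$. Hence for $i=1$ we get $(W\T W)^0=\fI$, whereas for $i\geq 2$ the positive fractional power gives $(\fw\fw\T)^{\frac{i-1}N} = \norm{\fw}^{\frac{2(i-1)}N}\fP_{\fw}$, with $\fP_{\fw}=\fw\fw\T/\norm{\fw}^2$. For every $i$ the scalar exponents combine to the same value, $\tfrac{2(N-i)}N+\tfrac{2(i-1)}N = \tfrac{2(N-1)}N$, so the $i=1$ term contributes $\norm{\fw}^{\frac{2(N-1)}N}\nabla_W L$ and each of the remaining $N-1$ terms contributes $\norm{\fw}^{\frac{2(N-1)}N}\nabla_W L\,\fP_{\fw}$. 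Summing, transposing back to obtain $\pd{\fw}{\tau}$, and using the symmetry of $\fP_{\fw}$ yields exactly the claimed expression.

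I expect the only real obstacle to be the balancedness-to-power identities in the second step: one must show that $\fW_{j+1}\T\fW_{j+1}=\fW_j\fW_j\T$ propagates a common orthonormal singular basis through the entire chain, so that the partial products $W_{>i}W_{>i}\T$ genuinely equal fractional powers of $WW\T$ (and similarly for $W_{<i}\T W_{<i}$). This is the technical heart of \citet{arora18} and is most cleanly handled through a simultaneous SVD of adjacent factors; once this alignment is established, the first and third steps are essentially bookkeeping.
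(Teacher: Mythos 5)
The paper does not prove this lemma at all: it is imported verbatim as Claim~2 of \citet{arora18}, so there is no internal proof to compare against. Your reconstruction is correct and is essentially the derivation given in that reference -- product rule on the end-to-end matrix, the chain-rule form of $\pd{L}{\fW_i}$, conservation of balancedness along the flow, the resulting fractional-power identities $W_{>i}W_{>i}\T=(WW\T)^{\frac{N-i}{N}}$ and $W_{<i}\T W_{<i}=(W\T W)^{\frac{i-1}{N}}$, and the collapse to $\norm{\fw}^{\frac{2(N-1)}{N}}$ times identity-plus-$(N-1)\fP_{\fw}$ in the rank-one case -- and you correctly flag the singular-basis alignment as the one step that requires real work.
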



\begin{proof}[Proof of Theorem~\ref{thm:closed-form}]
  Similarly to the case $N=1$, we start by looking at the time-derivative of $L$,
  \begin{align}
    \begin{split} \label{eq:dLdt-N2}
    L'(\tau) =& \nabla L^1(\fw(\tau))\T\paren{\pd{\fw(\tau)}{\tau}} \\
    =& -\norm{\fw(\tau)}^{\frac{2(N-1)}{N}} \left( \norm{\nabla L^1(\fw(\tau))}^2 \right. \\
    & \left. + (N-1)\cdot \norm{ \fP_{\fw(\tau)}\nabla L^1(\fw(\tau))}^2 \right) \\
    \leq & -\norm{\fw(\tau)}^{\frac{2(N-1)}{N}} \cdot \norm{\nabla L^1(\fw(\tau))}^2 .
    \end{split}
  \end{align}
  It is non-positive, so $\fw(\tau)$ stays within the $L(0)$-sublevel set throughout optimisation,
  \begin{equation}
   \fw(\tau)\in\cW = \braces{\fw: L^1(\fw)\leq L(0)}.
 \end{equation}
 Also, $\cW$ is convex and by Assumption~(\ref{ass:better-than-zero}) it does not contain $\f0$.
 We can therefore take $\delta > 0$ to be the distance between $\cW$ and $\f0$,
 and it follows that $\norm{\fw(\tau)}\geq\delta$ for $\tau\in[0,\infty)$.
 
 Now, noting that $\fX$ is almost surely full-rank, apply Corollary~\ref{cor:pl} to $\cW$ and $\fw(\tau)$ to upper-bound the right-hand side of~(\ref{eq:dLdt-N2}),
 \begin{equation}
   L'(\tau) \leq - c\delta^{\frac{2(N-1)}{N}} L(\tau).
 \end{equation}
 Letting $\tilde c = c\delta^{\frac{2(N-1)}{N}}$, we get
 $(\log L(\tau))' \leq -\tilde c$
 and consequently $L(t)\leq L(0)\cdot e^{-\tilde c t}$.
 This proves convergence in the objective, $L(t)\to 0$ as $t\to\infty$.

 To prove convergence in parameters, we decompose the `error' $\fw(\tau) - \hat\fw$ into orthogonal components and bound each of them separately,
 \begin{multline} \label{eq:orth-decomposition}
   \norm{\fw(\tau) - \hat\fw}^2 = \norm{\fP_\fX(\fw(\tau)-\hat\fw)}^2 \\ + \norm{\fP_\fQ(\fw(\tau)-\hat\fw)}^2,
 \end{multline}
 where the columns of $\fQ\in\R^{d\times(d-n)}$ orthogonally complement those of $\fX$.
 If $n\geq d$, we simply bound the first term and disregard the second one.

 To bound the first term, invoke Theorem~\ref{thm:strong-convexity} with $\cW$, $\fv=\fP_\fX\fw(\tau)$ and $\fw=\fP_\fX\hat\fw$.
 One can check that $L^1(\fP_\fX\fu) = L^1(\fu)$ for all $\fu\in\R^d$, so $\fP_\fX\fw(\tau)\in\cW$ and our use of the theorem is legal.
 We obtain
 \begin{equation} \label{eq:norm-in-span}
   L(\tau) \geq \frac \mu 2 \norm{\fP_\fX(\fw(\tau) - \hat\fw)}^2.
 \end{equation}
 Since $L(\tau)\to 0$, it follows that
 \begin{equation} \label{eq:in-span-convergence}
  \norm{\fP_\fX(\fw(\tau) - \hat\fw)}^2\to 0
 \end{equation}
 as $\tau\to\infty$. 

 For the second term,
 notice that $\hat\fw\in\mathrm{span}(\fX)$, so $\fP_\fQ\hat\fw$ vanishes and we are left with $\norm{\fP_\fQ\fw(\tau)}^2$.
 Denote this quantity $q(\tau)$.
 Its time derivative is
 \begin{align}
   \begin{split}
     q'(\tau) =& \ 2(\fP_\fQ\fw(\tau))\T\paren{\pd{\fw(\tau)}{\tau}} \\
    =& -2\norm{\fw(\tau)}^{\frac{2(N-1)}N} \bigg( \fw(\tau)\T\fP_\fQ \nabla L^1(\fw(\tau)) + \\
   & \frac{(N-1)}{\norm{\fw(\tau)}^2}\cdot \fw(\tau)\T\fP_\fQ\fw(\tau) \cdot \fw(\tau)\T \nabla L^1(\fw(\tau)) \bigg) \\
   =& -2q(\tau) (N-1) \norm{\fw(\tau)}^{-2/N}  \fw(\tau)\T\nabla L^1(\fw(\tau)),
   \end{split}
 \end{align}
 where we have used the fact that $\nabla L^1(\fw(\tau))\in\mathrm{span}(\fX)$ (Theorem~\ref{thm:gradient}) and $\fQ$ is orthogonal to $\fX$.
 Rearranging, we obtain
 \begin{equation}
   \dd{}{\tau}\paren{\frac{\log q(\tau)}{2(N-1)}} = -\norm{\fw(\tau)}^{-2/N}\cdot \fw(\tau)\T \nabla L^1(\fw(\tau)).
 \end{equation}
 It turns out that the right-hand side expression is integrable in yet another way, namely
 \begin{multline}
   \dd{}{\tau}\paren{\frac 1 {2N} \log \norm{\fw(\tau)}^2 } = \\
   -\norm{\fw(\tau)}^{-2/N}\cdot \fw(\tau)\T \nabla L^1(\fw(\tau)).
 \end{multline}
 Equating the two and integrating over $[0,t]$ yields
 \begin{equation}
   \log\frac{q(t)}{q(0)} = \frac{N-1}N \cdot \log\frac{\norm{\fw(t)}^2}{\norm{\fw(0)}^2} ,
 \end{equation}
 which implies
 \begin{equation} \label{eq:qt}
   \frac{q(t)}{\norm{\fw(t)}^2} \leq \paren{\frac{\norm{\fw(0)}}{\norm{\fw(t)}}}^{2/N},
 \end{equation}
 because $q(0)\leq \norm{\fw(0)}^2$.

 We now bound the norm of $\fw(t)$.
 Starting from an orthogonal decomposition similar to (\ref{eq:orth-decomposition}) and applying (\ref{eq:in-span-convergence}) with (\ref{eq:qt}), we get
 \begin{align}
   \begin{split}
   \norm{\fw(t)}^2 = & \norm{\fP_\fX\fw(t)}^2 + \norm{\fP_\fQ\fw(t)}^2 \\
   \limsup_{t\to\infty}\norm{\fw(t)}^2 \leq & \norm{\hat\fw}^2 + \norm{\fw(0)}^{\frac 2 N} \limsup_{t\to\infty} \norm{\fw(t)}^{2-\frac{2}N}.
 \end{split}
 \end{align}
 Denote $\nu:=\limsup_{t\to\infty} \norm{\fw(t)}$. By the same orthogonal decomposition, we also know that $\nu^2 \geq \limsup_{t\to\infty} \norm{\fP_\fX\fw(t)}^2=\norm{\hat\fw}^2 > 0$, so we can divide both sides above by $\nu^2$,
 \begin{equation}
   1 \leq \frac{\norm{\hat\fw}^2}{\nu^2} + \frac{\norm{\fw(0)}^{2/N} }{\nu^{2/N}} =: f(\nu).
 \end{equation}
 On the right-hand side, we now have a decreasing function of $\nu$ that goes to zero as $\nu\to\infty$.
 However, evaluated at our specific $\nu$, it is lower-bounded by $1$, implying an implicit upper bound for $\nu$.

 How do we find this bound?
 Suppose we find some constant $K$ such that $f(K)\leq 1$.
 Then, because $f$ is decreasing, it must be the case that $\nu\leq K$.
 One such candidate for $K$ is
 \begin{equation}
   K = \norm{\hat\fw}\cdot \paren{ 1 - \frac{\norm{\fw(0)}^{2/N}}{\norm{\hat\fw}^{2/N}} } ^{\frac{-N}{2(N-1)}}.
 \end{equation}
 (Here we have used condition (\ref{ass:near-zero}): $\norm{\fw(0)} < \norm{\hat\fw}$.)
 To check that indeed $f(K)\leq 1$, start from the inequality
 \begin{multline}
   \paren{{\norm{\hat\fw}} /K}^{\frac{2(N-1)}N} + \frac{\norm{\fw(0)}^{2/N}}{\norm{\hat\fw}^{2/N}} = 1 \\
   \leq \paren{ 1 - \frac{\norm{\fw(0)}^{2/N}}{\norm{\hat\fw}^{2/N}} }^{\frac{-1}{N-1}}
   = (\norm{\hat\fw} / K)^{-\frac 2 N}.
 \end{multline}
 Taking the leftmost and rightmost expression and multiplying by $(\norm{\hat\fw}/K)^{2/N}$ yields
 \begin{equation}
   f(K) = \frac{\norm{\hat\fw}^2} {K^{2}} + \frac{\norm{\fw(0)}^{2/N}}{K^{2/N}} \leq 1.
 \end{equation}
 Hence,
 \begin{equation} \label{eq:norm-bound}
   \limsup_{t\to\infty} \norm{\fw(t)} \leq \norm{\hat\fw}\cdot \paren{ 1 - \frac{\norm{\fw(0)}^{2/N}}{\norm{\hat\fw}^{2/N}} } ^{\frac{-N}{2(N-1)}}.
 \end{equation}

 Finally, let us turn back to our original goal of bounding $\norm{\fw(\tau) - \hat\fw}^2$.
 With (\ref{eq:orth-decomposition}), (\ref{eq:in-span-convergence}), (\ref{eq:qt}) and (\ref{eq:norm-bound}), we now know that
 \begin{align}
   &\limsup_{t\to\infty} \norm{\fw(\tau) - \hat\fw}^2 \\
   & \leq \norm{\fw(0)}^{\frac 2 N}  \norm{\hat\fw}^{\frac{2(N-1)}{N}} \paren{ 1 - \frac{\norm{\fw(0)}^{\frac 2N}}{\norm{\hat\fw}^{\frac 2N}} } ^{-1} \\
   &= \frac{\norm{\hat\fw}^{2+2/N}}{\norm{\hat\fw}^{2/N} - \norm{\fw(0)}^{2/N} } - \norm{\hat\fw}^2.
 \end{align}
 Hence, if we initialise close enough to zero, as specified by condition~(\ref{ass:near-zero}), we can ensure that
 \begin{equation}
   \limsup_{t\to\infty} \norm{\fw(\tau)-\hat\fw}^2 < \epsilon^2.
 \end{equation}
 This concludes the proof.
\end{proof}

\section{Theorem~\ref{thm:bound} for Approximate Distillation}

We extend Theorem~\ref{thm:bound} to the setting where the student learns the solution
$\hat\fw = \fX(\fX\T\fX)^{-1}\fX\T\fw_*$
only $\epsilon$-approximately,
as is the case for deep linear networks initialised as in Theorem~\ref{thm:closed-form}.
When $n\geq d$, the teacher's weight vector is recovered exactly and the transfer risk is zero, even when the student is deep.
The following theorem therefore only covers the case $n<d$.

\begin{theorem}[Risk bound for approximate distillation] \label{thm:eps-bound}
Let $n<d$. For any training set $\fX\in\R^{d\times n}$, let $\hat h_\fX(\fx) = \ind\braces{\hat\fw_\epsilon\T\fx \geq 0}$ 
be a linear classifier whose weight vector is $\epsilon$-close to the distillation solution $\hat\fw$,
i.e. $\norm{\hat\fw_\epsilon - \hat\fw} \leq \epsilon$, 
where $\epsilon$ is a positive constant such that $\epsilon \leq \frac 1 2\norm{\hat\fw}$.
Define $\delta := \sqrt{\frac{2\pi\epsilon}{\norm{\hat\fw}}}$.
Then, it holds for any $\beta\in [0,\pi/2-\delta]$ that
\begin{equation}
\E_{\fX\sim P_\fx^{\otimes n}}\brackets{ R\big(\hat h_\fX \big| P_\fx,\fw_*\big)}
  \leq    p(\beta) + p(\pi/2-\delta-\beta)^n . 
\end{equation}
\end{theorem}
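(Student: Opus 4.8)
The plan is to re-run the proof of Theorem~\ref{thm:bound} for the case $n<d$, inserting one additional ingredient that converts the Euclidean approximation error $\norm{\hat\fw_\epsilon-\hat\fw}\le\epsilon$ into an angular error, and then absorbing that angular error into the teaching-point threshold. The only places where the exact solution $\hat\fw$ enters the earlier argument are (i) the strong-monotonicity step, which controls $\bar\alpha(\fw_*,\hat\fw)$, and (ii) the triangle inequality $\alpha(\hat\fw,\fx)\le\alpha(\fw_*,\hat\fw)+\alpha(\fw_*,\fx)$. Replacing $\hat\fw$ by $\hat\fw_\epsilon$ inserts one extra edge, $\alpha(\hat\fw_\epsilon,\hat\fw)$, into this chain, and the whole argument goes through provided this extra edge is at most $\delta$.

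First I would prove the perturbation lemma $\bar\alpha(\hat\fw,\hat\fw_\epsilon)\le\delta$. Writing $u=\hat\fw/\norm{\hat\fw}$ and $v=\hat\fw_\epsilon/\norm{\hat\fw_\epsilon}$, the elementary normalisation bound $\norm{u-v}\le 2\norm{\hat\fw-\hat\fw_\epsilon}/\norm{\hat\fw}\le 2\epsilon/\norm{\hat\fw}$ controls the chord length between the two unit vectors. Letting $\theta$ denote the angle between them, the chord length equals $2\sin(\theta/2)$, so Jordan's inequality $\sin(\theta/2)\ge\theta/\pi$ gives $\theta\le\pi\epsilon/\norm{\hat\fw}=\delta^2/2$. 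Since the hypothesis $\epsilon\le\tfrac12\norm{\hat\fw}$ forces $\delta\le\sqrt\pi<2$, we obtain $\theta\le\delta^2/2\le\delta$, hence $\bar\alpha(\hat\fw,\hat\fw_\epsilon)\le\theta\le\delta$. The same hypothesis also yields $\hat\fw_\epsilon\T\hat\fw\ge\norm{\hat\fw}(\norm{\hat\fw}-\epsilon)>0$, so the angle is acute and $\alpha(\hat\fw,\hat\fw_\epsilon)=\bar\alpha(\hat\fw,\hat\fw_\epsilon)\le\delta$.

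With this in hand I would reproduce the risk decomposition~\eqref{eq:risk-decomposition} verbatim, splitting the expected risk over the hard region $\bar\alpha(\fw_*,\fx)\ge\beta$ (bounded by $p(\beta)$) and the two easy regions. Fix an easy positive example, $\bar\alpha(\fw_*,\fx)<\beta$ and $\fw_*\T\fx>0$, and now call $\fx_i$ a \emph{good teaching point} if $\bar\alpha(\fw_*,\fx_i)<\pi/2-\delta-\beta$. If at least one good teaching point exists, strong monotonicity (Lemma~\ref{lemma:improvement}) gives $\bar\alpha(\fw_*,\hat\fw)\le\bar\alpha(\fw_*,\fx_i)<\pi/2-\delta-\beta$, and the three-term triangle inequality $\alpha(\hat\fw_\epsilon,\fx)\le\alpha(\hat\fw_\epsilon,\hat\fw)+\alpha(\fw_*,\hat\fw)+\alpha(\fw_*,\fx)<\delta+(\pi/2-\delta-\beta)+\beta=\pi/2$ shows $\hat\fw_\epsilon\T\fx>0$, a correct prediction. (As in the original proof, the sign conditions $\fw_*\T\fx>0$ and $\hat\fw\T\fw_*=\norm{\fP_\fX\fw_*}^2\ge0$ let us replace each signed angle by its unsigned counterpart.) Therefore an error can occur only when every $\fx_i$ is a bad teaching point, an event of probability $p(\pi/2-\delta-\beta)^n$ by independence; the negative easy region is handled symmetrically. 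Summing the three contributions gives $p(\beta)+(1-p(\beta))\,p(\pi/2-\delta-\beta)^n\le p(\beta)+p(\pi/2-\delta-\beta)^n$, which is the claim, and the restriction $\beta\in[0,\pi/2-\delta]$ is exactly what keeps the shifted threshold $\pi/2-\delta-\beta$ nonnegative.

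The main obstacle is the perturbation lemma: one must pick $\delta$ large enough to dominate the \emph{worst-case} angular deviation of $\hat\fw_\epsilon$ over the entire ball $\norm{\hat\fw_\epsilon-\hat\fw}\le\epsilon$, and in particular rule out a sign flip that would push $\hat\fw_\epsilon$ into the opposite half-space; this is precisely where the hypothesis $\epsilon\le\tfrac12\norm{\hat\fw}$ is used, both to keep the angle acute and to guarantee $\delta^2/2\le\delta$. Once the angular error is bounded by $\delta$, the remainder is a mechanical repetition of Theorem~\ref{thm:bound} with the teaching-point threshold tightened from $\pi/2-\beta$ to $\pi/2-\delta-\beta$.
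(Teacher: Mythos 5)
Your proposal is correct and follows essentially the same route as the paper's proof: an angular perturbation lemma bounding the angle between $\hat\fw$ and $\hat\fw_\epsilon$ by $\delta$, followed by a verbatim rerun of the Theorem~\ref{thm:bound} decomposition with the teaching-point threshold shifted to $\pi/2-\delta-\beta$ and the three-term triangle inequality $\alpha(\hat\fw_\epsilon,\fx)\leq\alpha(\hat\fw_\epsilon,\hat\fw)+\alpha(\fw_*,\hat\fw)+\alpha(\fw_*,\fx)$. The only divergence is in the proof of the perturbation lemma: where the paper (Lemma~\ref{lemma:small-angle}) lower-bounds $\cos\alpha(\hat\fw,\hat\fw_\epsilon)$ directly and invokes $\cos x\leq 1-x^2/\pi$, you bound the chord between the normalized vectors and apply Jordan's inequality, which in fact yields the marginally sharper estimate $\alpha(\hat\fw,\hat\fw_\epsilon)\leq\pi\epsilon/\norm{\hat\fw}=\delta^2/2\leq\delta$.
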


The result is very similar to Theorem~\ref{thm:bound} in the main text,
the only difference is the constant $\delta$ which compensates for the imprecision in learning $\hat\fw$ by pushing the bound up (recall that $p$ is decreasing).
However, as $\epsilon$ goes to zero, so does $\delta$ and we recover the original bound.

For the proof, we start with a tool for controlling the angle between $\hat\fw$ and $\hat\fw_\epsilon$.
Recall that the angle is defined as
\begin{equation}
  \alpha(\fw,\fv) = \cos^{-1}\paren{\frac{\fw\T\fv}{\norm{\fw}\cdot\norm{\fv}}}
\end{equation}
for $\fw,\fv\in\R^d \setminus \braces{\f0}$.

\begin{lemma} \label{lemma:small-angle}
  Let $\fw, \fv\in\R^d$ be such that $\norm{\fw - \fv} \leq \epsilon$, where $\epsilon \leq \frac 1 2\norm{\fw}$.
  Then $\alpha(\fw, \fv) \leq \sqrt{\frac{2\pi\epsilon}{\norm{\fw}}}$.
\end{lemma}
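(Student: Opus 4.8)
The plan is to separate the estimate into a geometric step that bounds $\sin\alpha(\fw,\fv)$ and an elementary-calculus step that converts the resulting $\arcsin$ estimate into the claimed square-root bound, using the hypothesis $\epsilon\le\tfrac12\norm{\fw}$ to keep everything in a tame regime.

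First I would check that the angle is acute, so that it is recovered from its sine. Since $\fw\T\fv = \norm{\fw}^2 + \fw\T(\fv-\fw) \ge \norm{\fw}^2 - \norm{\fw}\epsilon \ge \tfrac12\norm{\fw}^2 > 0$, we have $\alpha(\fw,\fv)\in[0,\pi/2)$, and in particular $\fv\neq\f0$ so the angle is well defined. Next I would bound the sine of the angle by applying the law of sines to the triangle with vertices $\f0,\fw,\fv$: its angle at $\f0$ equals $\alpha(\fw,\fv)$, the side opposite $\f0$ has length $\norm{\fw-\fv}$, and the side of length $\norm{\fw}$ is opposite the vertex $\fv$, so $\sin\alpha(\fw,\fv) = \norm{\fw-\fv}\,\sin(\text{angle at }\fv)/\norm{\fw} \le \norm{\fw-\fv}/\norm{\fw}\le\epsilon/\norm{\fw}$. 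Because $\alpha(\fw,\fv)$ is acute, applying $\arcsin$ (increasing) gives $\alpha(\fw,\fv)\le\arcsin(\epsilon/\norm{\fw})$.

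Writing $t := \epsilon/\norm{\fw}\in[0,\tfrac12]$, it then remains to show $\arcsin(t)\le\sqrt{2\pi t}$. I would bound $\arcsin(t)=\int_0^t(1-s^2)^{-1/2}\,ds\le t(1-t^2)^{-1/2}\le\tfrac{2}{\sqrt3}\,t$, where the first inequality uses that the integrand is increasing and the second uses $t\le\tfrac12$; squaring the target inequality $\tfrac{2}{\sqrt3}t\le\sqrt{2\pi t}$ reduces it to $\tfrac43 t\le 2\pi$, which holds since $t\le\tfrac12$. Chaining the two steps yields $\alpha(\fw,\fv)\le\sqrt{2\pi\epsilon/\norm{\fw}}$.

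The only delicate point is getting the \emph{tight} constant $\epsilon/\norm{\fw}$ in the geometric step rather than the weaker $2\epsilon/\norm{\fw}$. The naive estimate, which uses that the component of $\fv$ orthogonal to $\fw$ has norm at most $\norm{\fw-\fv}\le\epsilon$ together with $\norm{\fv}\ge\norm{\fw}-\epsilon\ge\tfrac12\norm{\fw}$, only yields $\sin\alpha(\fw,\fv)\le\epsilon/\norm{\fv}\le 2\epsilon/\norm{\fw}$. Since $t$ is permitted to reach $\tfrac12$, that extra factor would force $\arcsin$ to be evaluated near the steep part of its range, where the simple linear bound above fails; the law-of-sines (equivalently, tangent-cone) argument avoids this and keeps the remaining analysis routine.
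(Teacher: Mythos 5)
Your proof is correct, but it takes a genuinely different route from the paper's. The paper works with the cosine: it expands $\norm{\fw-\fv}^2\le\epsilon^2$ to lower-bound the inner product, obtains $\cos(\alpha(\fw,\fv))\ge 1-\tfrac{2\epsilon}{\norm{\fw}}$, and then converts this to an angle bound via the inequality $\cos x\le 1-\tfrac{x^2}{\pi}$ on $[-\pi/2,\pi/2]$. You instead work with the sine: the law-of-sines step gives the tight estimate $\sin\alpha(\fw,\fv)\le\norm{\fw-\fv}/\norm{\fw}$, and the calculus step $\arcsin(t)\le t(1-t^2)^{-1/2}\le\tfrac{2}{\sqrt3}t$ for $t\le\tfrac12$ finishes the argument. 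All of your steps check out (the acuteness check, the law of sines, and the reduction of $\tfrac{2}{\sqrt3}t\le\sqrt{2\pi t}$ to $\tfrac43 t\le 2\pi$). What your route buys is worth noting: you actually establish the strictly stronger \emph{linear} bound $\alpha(\fw,\fv)\le\tfrac{2}{\sqrt3}\cdot\tfrac{\epsilon}{\norm{\fw}}$ and only weaken it at the end to match the stated $\sqrt{2\pi\epsilon/\norm{\fw}}$. The paper's cosine route cannot recover this: its lower bound on the cosine is of the form $1-O(\epsilon)$, and since $1-\cos x$ is quadratic near zero, any such bound can only yield $\alpha=O(\sqrt{\epsilon})$, whereas the true worst case has $1-\cos\alpha=O(\epsilon^2)$. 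If the lemma were stated with your linear bound, the quantity $\delta$ in Theorem~\ref{thm:eps-bound} would improve from $O(\sqrt{\epsilon/\norm{\hat\fw}})$ to $O(\epsilon/\norm{\hat\fw})$. Your closing remark about why the naive estimate $\sin\alpha\le\epsilon/\norm{\fv}\le 2\epsilon/\norm{\fw}$ is inadequate at $t=\tfrac12$ is also apt, though that discussion is not needed for the proof itself.
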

\begin{proof}[Proof of Lemma~\ref{lemma:small-angle}]
  The first step is to lower-bound the inner product $\fw\T\fv$.
  To that end, we expand and rearrange $\norm{\fw - \fv}^2 \leq \epsilon^2$ to obtain
  \begin{equation} \label{eq:2}
    2 \fw\T\fv \geq \norm{\fw}^2 + \norm{\fv}^2 - \epsilon^2.
  \end{equation}
  Now use the triangle relation $\norm{\fv} \geq \norm{\fw} - \epsilon$ squared to lower-bound the right-hand side of (\ref{eq:2}) and get
  \begin{equation}
    2 \fw\T\fv
    \geq 2 \norm{\fw}^2 -2\epsilon\norm{\fw},
  \end{equation}
  which implies
  \begin{equation}
    \frac{\fw\T\fv}{\norm{\fw}\cdot\norm{\fv}}
    \geq \frac{\norm{\fw}-\epsilon}{\norm{\fv}}
    \geq \frac{\norm{\fw}-\epsilon}{\norm{\fw}+\epsilon}
    \geq 1 - \frac{2\epsilon}{\norm{\fw}}.
  \end{equation}
  Thus,
  \begin{equation}
1 - \frac{2\epsilon}{\norm{\fw}} \leq \frac{\fw\T\fv}{\norm{\fw}\cdot\norm{\fv}} = \cos(\alpha(\fw,\fv)).
  \end{equation}
  The left-hand side is by assumption non-negative, so we have $\alpha(\fw,\fv)\in [-\pi/2, \pi/2]$.
  On this domain,
  \begin{equation}
   \cos x \leq 1 - \frac{x^2}{\pi},
 \end{equation}
 which lets us deduce
 \begin{equation}
   1 - \frac{2\epsilon}{\norm{\fw}} \leq 1 - \frac{\alpha(\fw,\fv)^2}{\pi}.
 \end{equation}
 Rearranging yields the result.
\end{proof}

\begin{proof}[Proof of Theorem~\ref{thm:eps-bound}]
  We decompose the expected risk as follows:
  \begin{align}\begin{split} \label{eqA:risk-decomposition}
      \E_{\fX\sim P_\fx^n}&\brackets{R\big(\hat h_\fX\big| P_\fx, \fw_* \big)} = 
      \p_{\substack{\fX\sim P_{\fx}^n \\ \fx\sim P_{\fx} }}[\fw_*\T \fx \cdot \hat\fw_\epsilon\T \fx < 0] = \\
      &= \int_{\fx: \bar\alpha(\fw_*, \fx) \geq \beta} \p_{\fX\sim P_{\fx}^n}[\fw_*\T \fx\cdot \hat\fw_\epsilon\T \fx <0 |\fx]\, \mathrm{d}P_{\fx} \\
      &+ \int_{\fx: \bar\alpha(\fw_*,\fx) < \beta ,\, \fw_*\T \fx > 0} \p_{\fX\sim P_{\fx}^n}[\hat\fw_\epsilon\T \fx <0 |\fx]\, \mathrm{d}P_{\fx} \\      
      &+ \int_{\fx: \bar\alpha(\fw_*,\fx) < \beta ,\, \fw_*\T \fx < 0} \p_{\fX\sim P_{\fx}^n}[\hat\fw_\epsilon\T \fx >0 |\fx]\, \mathrm{d}P_{\fx} .
    \end{split}\end{align}
  Let us fix some $\fx$ for which $\bar\alpha(\fw_*, \fx)<\beta$ and $\fw_*\T \fx > 0$; for this $\fx$ we have $\alpha(\fw_*,\fx)=\bar\alpha(\fw_*,\fx)$.
  Consider the situation where $\bar\alpha(\fw_*, \fx_i)< \pi/2-\beta -\delta$ for some $i$.
  Then by the triangle inequality, Lemma~\ref{lemma:small-angle} and Lemma~\ref{lemma:improvement},
  \begin{align}
    \alpha(\hat\fw_\epsilon,\fx) &\leq \alpha(\hat\fw_\epsilon,\hat\fw) + \alpha(\fw_*, \hat\fw) + \alpha(\fw_*,\fx) \\
                                 &\leq \delta + \bar\alpha(\fw_*,\fx_i) + \bar\alpha(\fw_*,\fx) \\
    &< \pi/2,
  \end{align}
  which implies $\hat\fw_\epsilon\T \fx > 0$, i.e. a correct prediction (same as the teacher's).
  Conversely, an error can occur only if $\bar\alpha(\fw_*,\fx_i) \geq \pi/2 - \delta - \beta $ for all $i$.
  Because $\fx_i$ are independent, we have
  \begin{align}
    \begin{split} \label{eqA:bound-trainset1}
    \p_{\fX\sim P_{\fx}^n}[\hat\fw_\epsilon\T \fx < &\, 0 |\,\fx: \bar\alpha(\fw_*,\fx)<\beta,\,  \fw_*\T \fx>0] \\
    &\leq \p_{\fX\sim P_{\fx}^n}[\forall_i: \bar\alpha(\fw_*,\fx_i) \geq \pi/2 - \delta - \beta] \\
    &= p(\pi/2-\delta-\beta)^n.
    \end{split}
  \end{align}
  By a symmetric argument, one can show that
  \begin{multline} \label{eqA:bound-trainset2}
    \p_{\fX\sim P_{\fx}^n}[\hat\fw_\epsilon\T \fx >0 |\,\fx: \bar\alpha(\fw_*,\fx)<\beta,\,  \fw_*\T \fx<0] \\ \leq p(\pi/2-\delta-\beta)^n.
  \end{multline}
  Combining (\ref{eqA:risk-decomposition}), (\ref{eqA:bound-trainset1}) and (\ref{eqA:bound-trainset2}) yields the result.
\end{proof}

\end{document}